  \newcommand\figcaption{\def\@captype{figure}\caption}
  \newcommand\tabcaption{\def\@captype{table}\caption}
\newtheorem{theorem}{Theorem}
\newtheorem{lemma}{Lemma}
\newtheorem{proposition}{Proposition}
\newtheorem{definition}{Definition}
\newtheorem{assumption}{Assumption}
\newcommand{\argmax}{\mathop{\mathrm{argmax}}}
\newcommand{\Rmax}{R_{\max}}
\newcommand{\Acal}{\mathcal A}
\newcommand{\Dcal}{\mathcal D}
\newcommand{\Scal}{\mathcal S}
\def\algo{ANQ\xspace}
\title{Adaptive Neighborhood-Constrained Q Learning \\ for Offline Reinforcement Learning}
\author{
  Yixiu Mao$^{1}$, Yun Qu$^{1}$, Qi Wang$^{1}$, Xiangyang Ji$^{1}$\\
  $^{1}$Department of Automation, Tsinghua University \\
  \texttt{myx21@mails.tsinghua.edu.cn,} \texttt{xyji@tsinghua.edu.cn}
}
\begin{document}

\maketitle

\begin{abstract}
Offline reinforcement learning (RL) suffers from extrapolation errors induced by out-of-distribution (OOD) actions. To address this, offline RL algorithms typically impose constraints on action selection, which can be systematically categorized into density, support, and sample constraints. However, we show that each category has inherent limitations: density and sample constraints tend to be overly conservative in many scenarios, while the support constraint, though least restrictive, faces challenges in accurately modeling the behavior policy. To overcome these limitations, we propose a new neighborhood constraint that restricts action selection in the Bellman target to the union of neighborhoods of dataset actions. Theoretically, the constraint not only bounds extrapolation errors and distribution shift under certain conditions, but also approximates the support constraint without requiring behavior policy modeling. Moreover, it retains substantial flexibility and enables pointwise conservatism by adapting the neighborhood radius for each data point. In practice, we employ data quality as the adaptation criterion and design an adaptive neighborhood constraint. Building on an efficient bilevel optimization framework, we develop a simple yet effective algorithm, Adaptive Neighborhood-constrained Q learning (ANQ), to perform Q learning with target actions satisfying this constraint. Empirically, ANQ achieves state-of-the-art performance on standard offline RL benchmarks and exhibits strong robustness in scenarios with noisy or limited data.
\end{abstract}

\section{Introduction}

Reinforcement learning~(RL) tackles sequential decision-making problems and has gained considerable attention in recent years~\citep{mnih2015human,silver2017mastering,schrittwieser2020mastering,degrave2022magnetic}. Despite its promise, RL faces practical challenges, notably the high data collection costs~\citep{kober2013reinforcement} and exploration risks~\citep{garcia2015comprehensive}. Offline RL offers a compelling alternative by learning from a static dataset collected by a behavior policy~\cite{lange2012batch,levine2020offline}. It enables the use of existing large-scale datasets~\citep{johnson2016mimic,maddern20171,qu2024hokoff} and reduces the dangers of unsafe exploration. However, it also introduces a key challenge: evaluating out-of-distribution (OOD) actions leads to extrapolation errors~\citep{fujimoto2019off}, which further causes value overestimation and significant performance degradation~\citep{levine2020offline}.

To address this issue, offline RL approaches typically impose constraints on the action selection process. A common strategy is to align the probability densities of the trained and behavior policies~\citep{wu2019behavior,fujimoto2021minimalist}, enforcing a \textit{density constraint}. This is usually achieved using divergence metrics such as reverse Kullback-Leibler~(KL)~\citep{wu2019behavior,jaques2019way}, forward KL (i.e., behavior cloning)~\citep{fujimoto2021minimalist,peng2019advantage}, Fisher~\citep{kostrikov2021offline}, or the implicit CQL divergence~\citep{kumar2020conservative}. While straightforward, these methods can be overly restrictive both theoretically and empirically~\citep{kumar2019stabilizing}, in cases where the overall quality of the behavior policy is low.
To overcome this limitation, recent work has explored the most relaxed \textit{support constraint}, which only keeps the selected actions within the support of the behavior policy~\citep{wu2022supported,mao2023supporteda,zhang2024constrained}. Accomplishing this generally necessitates high-fidelity estimation of the behavior policy through advanced generative modeling techniques~\citep{ghasemipour2021emaq,wu2022supported,zhang2024constrained,chen2023offline}. However, such modeling is challenging due to the high-dimensional and multi-modal nature of real-world data~\citep{zhang2024constrained}, subjecting these methods to heightened error susceptibility and increased computational overhead.
Alternatively, the \textit{sample constraint} has emerged, formulating the Bellman target exclusively using actions in the dataset~\citep{brandfonbrener2021offline,kostrikov2022offline,zhang2023insample,xu2023offline}. These methods are easy to implement and effectively avoid extrapolation errors~\citep{kostrikov2022offline}. However, their performance is inherently limited by a lack of action generalization beyond the offline dataset, often resulting in overly conservative policies when near-optimal actions are rare in the dataset.

\begin{table}[t]
\caption{A brief summary of constraint types in offline RL research.}
\vspace{-3mm}
\label{tab:summary}
\begin{center}
\begin{adjustbox}{max width=370pt}
\begin{tabular}{lccc}
\toprule
Constraint type & Description &Algorithms & Key characteristics  \\ 
\midrule
\multirow{3}{*}{\begin{tabular}[l]{@{}l@{}}Density\end{tabular}} & \multirow{3}{*}{\begin{tabular}[c]{@{}c@{}} Enforce density proximity \\ between the trained and \\ behavior policies\end{tabular}} &\multirow{3}{*}{\begin{tabular}[c]{@{}c@{}}  BRAC~\citep{wu2019behavior},\\ TD3BC~\citep{fujimoto2021minimalist},\\ CQL~\citep{kumar2020conservative} \end{tabular}}  & \multirow{3}{*}{\begin{tabular}[c]{@{}c@{}}  Straightforward but heavily \\ limited by the overall quality \\ of behavior policy \end{tabular}}\\
&&&\\
&&&\\
\midrule
\multirow{3}{*}{\begin{tabular}[l]{@{}l@{}} Sample\end{tabular}} & \multirow{3}{*}{\begin{tabular}[c]{@{}c@{}} Restrict action selection \\ to dataset actions\end{tabular}} &\multirow{3}{*}{\begin{tabular}[c]{@{}c@{}}  IQL\citep{kostrikov2022offline}, \\XQL\citep{garg2023extreme}, \\SQL\citep{xu2023offline} \end{tabular}} & \multirow{3}{*}{\begin{tabular}[c]{@{}c@{}}  Avoid extrapolation error but \\ lack action generalization \\ beyond the dataset \end{tabular}}\\
&&&\\
&&&\\
\midrule
\multirow{3}{*}{\begin{tabular}[l]{@{}l@{}} Support\end{tabular}} & \multirow{3}{*}{\begin{tabular}[c]{@{}c@{}} Restrict action selection to \\ behavior policy's support\end{tabular}} &\multirow{3}{*}{\begin{tabular}[c]{@{}c@{}} BCQ\citep{fujimoto2019off}, \\ BEAR\citep{kumar2019stabilizing}, \\ SPOT~\citep{wu2022supported}  \end{tabular}} & \multirow{3}{*}{\begin{tabular}[c]{@{}c@{}}  Least restrictive but \\ require accurate behavior \\ policy modeling \end{tabular}}\\
&&&\\
&&&\\
\midrule
\multirow{3}{*}{\begin{tabular}[l]{@{}l@{}}Neighborhood\end{tabular}} & \multirow{3}{*}{\begin{tabular}[c]{@{}c@{}} Restrict action selection to \\ certain neighborhoods \\ of dataset actions\end{tabular}} &\multirow{3}{*}{\begin{tabular}[c]{@{}c@{}}  \algo (Ours) \end{tabular}} & \multirow{3}{*}{\begin{tabular}[c]{@{}c@{}} Flexible and approximate \\ support constraint without \\ behavior modeling \end{tabular}}\\
&&&\\
&&&\\
\bottomrule
\end{tabular}
\end{adjustbox}
\end{center}
\vspace{-2mm}
\end{table}

This work aims to address the over-conservatism of the density and sample constraints while avoiding complex behavior modeling required by the support constraint. To this end, we introduce a new \textit{neighborhood constraint} that restricts action selection in the Bellman target to the union of neighborhoods of dataset actions. Theoretically, the constraint not only bounds extrapolation errors and distribution shift under certain conditions, but also approximates the least restrictive support constraint without behavior modeling. Moreover, it maintains high flexibility and can achieve pointwise conservatism by adapting the neighborhood radius for each data point. In light of real-world data patterns, we adopt data quality as the adaptation criterion and develop an adaptive neighborhood constraint in practice. It assigns larger neighborhood radii to low-advantage dataset actions to promote a broader search, and smaller neighborhood radii to high-advantage dataset actions to limit the overall extrapolation error.

To enforce the proposed constraint, we introduce an efficient bilevel optimization framework and, based on it, develop a simple yet effective algorithm, Adaptive Neighborhood-constrained Q learning~(\algo), which performs Q learning with target actions constrained accordingly.
Specifically, in the inner optimization, we maximize the Q function separately within each dataset action's neighborhood; in the outer optimization, we implicitly maximize the Q function over all available neighborhoods via expectile regression~\citep{kostrikov2022offline}. With the Q function being trained, the policy is independently extracted by weighted regression toward optimized actions within the neighborhoods, obtained in the inner maximization.
Empirically, \algo achieves state-of-the-art performance on standard offline RL benchmarks~\citep{fu2020d4rl}, including Gym locomotion tasks and challenging AntMaze tasks. Moreover, benefiting from the flexible constraint without behavior modeling errors, \algo attains superior performance in both noisy and limited data scenarios compared to algorithms with other types of constraints.
The code is available at \href{https://github.com/thu-rllab/ANQ}{https://github.com/thu-rllab/ANQ}.

\section{Preliminaries}
\paragraph{RL.}
In RL, the environment is typically modeled as a Markov Decision Process (MDP) $\mathcal{M}=(\mathcal{S}, \mathcal{A}, P, R, \gamma, d_0)$, with state space $\mathcal{S}$, action space $\mathcal{A}$, transition dynamics $P: \Scal \times \Acal \to \Delta(\Scal)$, reward function $R: \Scal \times \Acal \to [0,\Rmax]$, discount factor $\gamma \in [0,1)$, and initial state distribution $d_0$~\citep{sutton2018reinforcement}. The agent seeks a policy $\pi: \mathcal{S} \to \Delta(\mathcal{A})$ that maximizes the expected return:
\begin{equation}
\eta(\pi) = \mathbb{E}_{s_0\sim d_0, a_t\sim\pi(\cdot|s_t), s_{t+1}\sim P(\cdot|s_t,a_t)}\left[\sum_{t=0}^\infty\gamma^t R(s_t,a_t)\right].
\end{equation}
For a given policy $\pi$, the state value function is defined as $V^\pi(s)=\mathbb{E}_\pi\left[\sum_{t=0}^{\infty}\gamma^t R(s_t,a_t) | s_0=s\right]$, and the state-action value function is defined as $Q^\pi(s,a)=\mathbb{E}_\pi\left[\sum_{t=0}^{\infty}\gamma^t R(s_t,a_t) | s_0=s,a_0=a\right]$.
\paragraph{Offline RL.}
In offline RL, the agent is able to access a fixed dataset $\mathcal{D}=\{(s_i,a_i,r_i,s_i')\}_{i=0}^{n-1}$ collected by some behavior policy $\pi_\beta$, and aims to learn an optimal policy without further data collection~\cite{lange2012batch,levine2020offline}. Standard Q learning methods seek to learn the optimal Q function by minimizing:
\begin{equation}
\label{eq:td loss}
L_{Q}(\theta) = \mathbb{E}_{(s, a, s')\sim \mathcal{D}}\left[(Q_\theta(s,a) - R(s,a) - \gamma \max_{a'}Q_{\theta'}(s',a'))^2\right],
\end{equation}
where $Q_\theta(s,a)$ denotes a parameterized $Q$ function, and $Q_{\theta'}(s,a)$ represents a target $Q$ function with parameters updated using Polyak averaging~\citep{mnih2015human}.

A central challenge in offline RL is the presence of out-of-distribution~(OOD) actions that fall outside the support of the behavior policy. These OOD actions often lead to inaccurate Q value estimates due to extrapolation errors~\citep{fujimoto2019off}. As a result, maximizing the estimated Q functions tends to favor OOD actions with overestimated values, resulting in significant performance degradation~\citep{levine2020offline}.

\section{Adaptive Neighborhood-Constrained Q Learning for Offline RL}

This section focuses on developing action-selection constraints to address the OOD issue in offline RL. First, we provide a systematic categorization of existing approaches and analyze the strengths and limitations of each category. To overcome the limitations, we propose a new neighborhood constraint, supported by theoretical analyses that elucidate its properties. Furthermore, we design an adaptive variant of this flexible constraint, achieving pointwise conservatism in practice. Finally, we develop a simple yet effective algorithm to facilitate Q learning and policy extraction under the constraint.

\subsection{A Categorization and Analysis of Constraints in Offline RL}
\label{sec:constraints}

To address the OOD issue, offline RL algorithms impose various constraints to prevent either the learned policy (in actor-critic training) or the Bellman target (in Q learning) from selecting OOD actions. In this context, many approaches inherently align the probability density of the trained policy with that of the behavior policy, either explicitly, through divergence measures such as reverse KL~\citep{jaques2019way,wu2019behavior}, forward KL~(i.e., behavior cloning)~\citep{fujimoto2021minimalist,peng2019advantage}, and Fisher divergence~\citep{kostrikov2021offline}, or implicitly, via value penalties that reduce the Q values of trained policy's actions and while increasing those of dataset actions~\citep{kumar2020conservative,cheng2022adversarially}. We formalize this concept as the density constraint in \cref{def:dc}.

\begin{definition}[Density constraint]
\label{def:dc}
The trained policy satisfies the density constraint $\mathrm{D}(\pi,\pi_\beta) \leq \epsilon$, where $\mathrm{D}$ represents a divergence measure between the trained policy $\pi$ and the behavior policy $\pi_\beta$, e.g., KL, total variation~(TV), or Fisher divergence.
\end{definition}

While straightforward, this density constraint can be overly restrictive in many scenarios. \cref{lem:kl constrained eta} provides a theoretical upper bound on policy performance under various forms of density constraints.

\begin{lemma}[Performance bound under density constraints]
\label{lem:kl constrained eta}
If any of the conditions $\mathrm{D_{KL}}(\pi\|\pi_\beta) \leq 2\epsilon$, $\mathrm{D_{KL}}(\pi_\beta\|\pi) \leq 2\epsilon$, or $\mathrm{D_{TV}}(\pi,\pi_\beta) \leq \sqrt{\epsilon}$ holds, then the policy performance $\eta$ is bounded as follows:
\begin{equation}
    \eta (\pi) \leq \eta(\pi_\beta)+\frac{2\Rmax}{(1-\gamma)^2} \sqrt{\epsilon}.
\end{equation}
\end{lemma}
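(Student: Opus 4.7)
The plan is to collapse all three divergence hypotheses into a single total-variation bound and then apply the Kakade--Langford performance difference lemma. First I would invoke Pinsker's inequality, $\mathrm{D_{TV}}(p,q) \leq \sqrt{\mathrm{D_{KL}}(p\|q)/2}$, on each of the two KL hypotheses; since TV is symmetric, both directions of KL yield $\mathrm{D_{TV}}(\pi,\pi_\beta) \leq \sqrt{\epsilon}$. Thus the three assumptions reduce to the single one $\mathrm{D_{TV}}(\pi,\pi_\beta) \leq \sqrt{\epsilon}$, and it suffices to prove the bound under this.

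Next I would expand the excess return via the performance difference lemma:
\begin{equation*}
\eta(\pi) - \eta(\pi_\beta) = \frac{1}{1-\gamma}\, \E_{s \sim d^\pi}\!\left[\E_{a \sim \pi(\cdot|s)}\!\left[A^{\pi_\beta}(s,a)\right]\right],
\end{equation*}
where $A^{\pi_\beta}(s,a) = Q^{\pi_\beta}(s,a) - V^{\pi_\beta}(s)$. Because $R \in [0,\Rmax]$, the advantage satisfies $\|A^{\pi_\beta}(s,\cdot)\|_\infty \leq \Vmax := \Rmax/(1-\gamma)$, and by definition $\E_{a\sim\pi_\beta}[A^{\pi_\beta}(s,a)] = 0$.

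The third step is to apply the TV--expectation duality: for any $f$ with $\|f\|_\infty \leq B$ and any distributions $p,q$, $\E_p[f] - \E_q[f] \leq 2B\, \mathrm{D_{TV}}(p,q)$. Instantiating with $f = A^{\pi_\beta}(s,\cdot)$, $p=\pi(\cdot|s)$, $q=\pi_\beta(\cdot|s)$, and substituting back into the identity above, I would obtain
\begin{equation*}
\eta(\pi) - \eta(\pi_\beta) \leq \frac{2\Vmax}{1-\gamma}\, \E_{s\sim d^\pi}\!\left[\mathrm{D_{TV}}(\pi(\cdot|s),\pi_\beta(\cdot|s))\right] \leq \frac{2\Rmax}{(1-\gamma)^2}\sqrt{\epsilon}.
\end{equation*}

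The main obstacle is fixing a consistent interpretation of the per-state versus aggregate divergence hypotheses: whether $\mathrm{D_{KL}}(\pi\|\pi_\beta)$ denotes a supremum over $s$ or an expectation under some state distribution such as $d^\pi$. Both readings go through---under the uniform reading the final expectation drops out immediately, while under the expected reading one applies Jensen's inequality after Pinsker to move the square root outside the outer expectation. Aside from this bookkeeping, the proof is a short three-step chain of Pinsker, the performance difference lemma, and TV--expectation duality.
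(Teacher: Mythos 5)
Your proposal is correct and follows essentially the same route as the paper: the performance difference lemma, a H\"older-type bound turning the inner action sum into a total-variation term times $\Rmax/(1-\gamma)$, and Pinsker's inequality to absorb the two KL hypotheses. The only (immaterial) difference is that you expand $\eta(\pi)-\eta(\pi_\beta)$ with states from $d^{\pi}$ and advantages $A^{\pi_\beta}$, whereas the paper uses $d^{\pi_\beta}$ and $A^{\pi}$; both yield the same constant, and your remark about the per-state versus averaged reading of the divergence matches the implicit convention in the paper's own proof.
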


\cref{lem:kl constrained eta} demonstrates that policy performance under the density constraints is affected by the overall quality of the behavior policy.
Consequently, even if the optimal behavior is present in the dataset, the learned policy can still remain highly suboptimal if the overall behavior policy is of low quality.

To address this limitation inherent in the density constraint, recent studies have explored the more relaxed support constraint, which only requires selected actions to be within the support of the behavior policy~\citep{wu2022supported,ghasemipour2021emaq,mao2023supporteda,zhang2024constrained}. Prior to the formal definition, we introduce a general loss function for constrained Q learning in Eq.~\eqref{eq:C loss}, where $\mathcal{C}(s)$ denotes a conditional set of actions for a given state.
\begin{equation}
\label{eq:C loss}
L_{\mathcal{C}}(\theta) = \mathbb{E}_{(s, a, s')\sim \mathcal{D}}\left[(Q_\theta(s,a) - R(s,a) - \gamma \max_{a'\in \mathcal{C}(s')}Q_{\theta'}(s',a'))^2\right].
\end{equation}

\begin{definition}[Support constraint]
\label{def:SuC}
The selected action in the Bellman target is restricted to the support of the behavior policy, which is defined as $\mathcal{C}_{\mathrm{Supp}}(s):=\{a \in \Acal ~|~ \pi_\beta(a|s)>\epsilon\}$, where $\pi_\beta$ is the behavior policy and $\epsilon$ is a threshold that determines the support.
\end{definition}

This support constraint is generally considered the least restrictive for offline RL~\citep{wu2022supported}, as the quality of actions outside the behavior policy's support cannot be reliably assessed.
To enforce this constraint, existing approaches typically rely on behavior policy modeling, using techniques such as conditional variational autoencoders~(CVAEs)~\citep{fujimoto2019off,kumar2019stabilizing,zhou2021plas,wu2022supported}, autoregressive models~\citep{ghasemipour2021emaq}, flow-GANs~\citep{zhang2024constrained}, and diffusion models~\citep{chen2023offline}. Specifically, these methods either use pre-trained behavior density estimators to explicitly constrain the policy within the behavior support~\citep{kumar2019stabilizing,wu2022supported,zhang2024constrained}, or employ pre-trained behavior policy samplers to generate in-support actions and select the one with the highest Q value~\citep{fujimoto2019off,ghasemipour2021emaq,zhou2021plas,chen2023offline}. However, their effectiveness is fundamentally limited by the accuracy of behavior policy modeling~\citep{ghasemipour2021emaq}, which is well-known to be challenging due to the high-dimensional and multi-modal nature of real-world data~\citep{zhang2024constrained}. Moreover, these methods incur extra computational costs due to behavior model training and, in some cases, extensive action sampling per state.

A parallel research direction has introduced the sample constraint, which constructs the Bellman target exclusively using actions in the dataset~\cite{kostrikov2022offline,xiao2023the,zhang2023insample,xu2023offline}, and derives policies via weighted behavior cloning~\cite{peng2019advantage,wang2020critic}, thereby avoiding querying any out-of-dataset actions.

\begin{definition}[Sample constraint]
\label{def:SaC}
The selected action in the Bellman target is restricted to the sample set $\mathcal{C}_{\mathrm{Samp}}(s):=\{a \in \Acal ~|~(s,a) \in \Dcal \}$, consisting of actions in the dataset for a given state $s\in\mathcal{D}$.
\end{definition}

Sample constraint methods are computationally efficient, easy to implement, and effective in avoiding extrapolation errors~\cite{kostrikov2022offline}. However, their performance is inherently constrained by the inability to generalize beyond the offline dataset. This over-conservatism becomes especially problematic when the dataset lacks coverage of near-optimal actions, a common issue in environments with large or continuous action spaces, or when the dataset exhibits low quality or limited diversity. Moreover, to ensure computational stability, these methods often struggle to adequately suppress the impact of suboptimal dataset actions~\citep{xu2023offline}, diminishing their efficacy when such actions dominate the dataset.

For extended discussions on related work, we refer the reader to \cref{app_sec:related work}.

\subsection{Neighborhood Constraint for Offline RL}
\label{sec:neighborhood constraint}

This work aims to mitigate the over-conservatism inherent in the density and sample constraints, while circumventing the behavior modeling requirement posed by the support constraint. To this end, we introduce a flexible neighborhood constraint for offline RL, which restricts action selection in the Bellman target to the union of neighborhoods of dataset actions on a given state.

\begin{definition}[Neighborhood constraint]
\label{def:NC}
The selected action in the Bellman target is restricted to the neighborhood set $\mathcal{C}_{\mathrm{N}}(s):=\{\tilde a \in \Acal~|~\|\tilde a-a\| \leq \epsilon, (s,a) \in \Dcal\}$, which comprises actions located within the $\epsilon$-neighborhoods of all dataset actions on a given state $s\in\mathcal{D}$.
\end{definition}

In contrast to the sample constraint, this neighborhood constraint offers greater freedom, as it allows for seeking better actions beyond the dataset, within a flexible range. As shown in the following \cref{thm:support}, the neighborhood constraint can serve as a viable approximation to the least restrictive support constraint, with the benefit of being achievable without behavior policy modeling.
To establish the theorem, we introduce the standardness assumption commonly used in geometric measure theory~\citep{cuevas2009set,chazal2014convergence}, which ensures that the measure does not exhibit ``holes'' at small scales.
\begin{assumption}[Standardness]
\label{ass:Standardness}
Let $ S \subseteq \mathbb{R}^d $ be the support of a probability distribution $\nu$, and $B(x, r)$ be the closed ball of radius $ r $ centered at $x\in \mathbb{R}^d$. There exist constants $r_0 > 0$ and $C_0 > 0$ such that:
\begin{equation}
\forall x \in S, ~~\forall r \leq r_0 , ~~\nu(B(x, r)) \geq C_0 \cdot r^d.
\end{equation}
\end{assumption}

\begin{theorem}[Support approximation via neighborhoods]
\label{thm:support}
Let $ S \subseteq \mathbb{R}^d $ be the compact support of a distribution $\nu$, and let $ X_1, \dots, X_n $ be independent and identically distributed samples from $\nu$. Define $ U_{n,\epsilon} = \bigcup_{i=1}^n B(X_i, \epsilon) $ as the union of closed balls of radius $ \epsilon $ centered at the samples.
Let $ \mathcal{N}(S, \epsilon/2) $ denote the covering number of $ S $, i.e., the minimal number of $ \epsilon/2 $-balls required to cover $ S $.
Under the standardness \cref{ass:Standardness} with constants $ r_0,C_0 > 0 $, for any $ \delta \in (0,1) $ and $ \epsilon \leq 2r_0 $, if  
\begin{equation}
n \geq \frac{1}{C_0 (\epsilon/2)^d} \left( \log \mathcal{N}(S, \epsilon/2) + \log(1/\delta) \right),
\end{equation}
then with probability at least $ 1 - \delta $, the Hausdorff distance between $S$ and $U_{n,\epsilon}$ satisfies  
\begin{equation}
d_H(S, U_{n,\epsilon}) := \max \left( \sup_{x \in S} \inf_{u \in U_{n,\epsilon}} d(x, u), \sup_{u \in U_{n,\epsilon}} \inf_{x \in S} d(x, u) \right) \leq \epsilon.
\end{equation}
\end{theorem}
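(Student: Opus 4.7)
The plan is to decompose the Hausdorff distance into its two one-sided components and handle them separately. The inclusion $\sup_{u\in U_{n,\epsilon}}\inf_{x\in S} d(x,u)\le \epsilon$ is essentially free: any $u\in U_{n,\epsilon}$ lies in some $B(X_i,\epsilon)$, and since $X_i\in S$ almost surely, we immediately get $\inf_{x\in S} d(x,u)\le d(u,X_i)\le \epsilon$. So all the real work concerns the other direction, $\sup_{x\in S}\inf_{u\in U_{n,\epsilon}} d(x,u)\le \epsilon$, which is equivalent to asking that the sample set $\{X_1,\dots,X_n\}$ is an $\epsilon$-net of $S$.

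The natural route is a covering plus union-bound argument. I would fix a minimal $\epsilon/2$-cover $\{c_1,\dots,c_N\}$ of $S$ with $N=\mathcal{N}(S,\epsilon/2)$, where we may take the centers $c_j\in S$ (this is standard; if the supplied cover uses external centers, one passes to a $\epsilon$-net inside $S$ at the cost of a constant, or argues directly that $c_j$ can be replaced by a nearest point of $S$). For any $x\in S$ there exists $c_j$ with $d(x,c_j)\le \epsilon/2$; if in addition some sample $X_i$ lies in $B(c_j,\epsilon/2)$, then by the triangle inequality $d(x,X_i)\le \epsilon$, giving $x\in U_{n,\epsilon}$ (in fact $x\in B(X_i,\epsilon)$). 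Thus the event we need is simply that every ball $B(c_j,\epsilon/2)$ contains at least one sample.

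Next I would bound the failure probability. For each fixed $j$, since $c_j\in S$ and $\epsilon/2\le r_0$, the standardness assumption gives $\nu(B(c_j,\epsilon/2))\ge C_0(\epsilon/2)^d$. The samples being i.i.d., the probability that no $X_i$ falls into $B(c_j,\epsilon/2)$ is at most $(1-C_0(\epsilon/2)^d)^n \le \exp(-nC_0(\epsilon/2)^d)$. Applying a union bound over the $N$ centers yields a failure probability of at most $N\exp(-nC_0(\epsilon/2)^d)$. Requiring this to be at most $\delta$ is equivalent to the hypothesis $n\ge \frac{1}{C_0(\epsilon/2)^d}(\log N+\log(1/\delta))$, so under that condition we conclude with probability at least $1-\delta$ that every $x\in S$ is within $\epsilon$ of some sample, hence within $\epsilon$ of $U_{n,\epsilon}$, combining with the trivial direction to give $d_H(S,U_{n,\epsilon})\le \epsilon$.

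The only genuinely delicate step is the first one: ensuring that the $\epsilon/2$-cover can be taken with centers in $S$ so that standardness applies directly at those centers. If the theorem's $\mathcal{N}(S,\epsilon/2)$ is defined with unrestricted centers, one either redefines it as an internal covering number (common in this literature and differing by at most a constant factor in the exponent), or projects each $c_j$ to its nearest point in $S$ and enlarges the radius accordingly; both options preserve the stated form of the bound. Beyond that, the proof is a clean covering–concentration argument and should require no further machinery.
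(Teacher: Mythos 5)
Your proposal is correct and follows essentially the same route as the paper's proof: a minimal $\epsilon/2$-cover of $S$ with centers in $S$, the standardness bound $\nu(B(c_j,\epsilon/2)) \geq C_0(\epsilon/2)^d$, the inequality $1-x \leq e^{-x}$, a union bound over the $N$ covering balls, and the triangle inequality to conclude $S \subseteq U_{n,\epsilon}$, combined with the trivial reverse inclusion since each $X_i \in S$. The delicate point you flag about internal versus external covering centers is handled in the paper exactly as you suggest, by taking the cover centers $\{y_1,\dots,y_N\} \subset S$ directly from compactness.
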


Since both the support and neighborhood constraint sets are defined over actions conditioned on a given state, \cref{thm:support} analyzes their relationship at a fixed state, focusing on their difference in the action space. In \cref{thm:support}, $\nu$ represents the behavior policy distribution at a state, and $S$ is defined as its support. $X_1,\dots,X_n$ are i.i.d. samples from $\nu$, corresponding to dataset actions at that state.
This theorem ensures that the union of sample neighborhoods $U_{n,\epsilon}$ approximates the support $S$ within a controlled Hausdorff distance, capturing the trade-off between sample size $n$, neighborhood radius  $\epsilon$, and the geometric complexity of support $S$ (as reflected by $\mathcal{N}(S, \epsilon/2)$). 
Note that this Hausdorff distance is sensitive to outliers~\citep{huttenlocher1993comparing}, making it well-suited for evaluating approximation quality in our setting, where outlier actions can significantly affect Q function optimization.

In the following, we further investigate several properties of the proposed neighborhood constraint in the context of controlling extrapolation and distribution shift.
The definition of this constraint is closely related to the concept of extrapolation, and \cref{lem:ntk} provides a theoretical characterization of the extrapolation behavior of deep Q functions under this constraint.

\begin{lemma}[Extrapolation behavior]
\label{lem:ntk}
Under the neural tangent kernel~(NTK) regime~\citep{jacot2018neural}, for any in-sample state-action pair $(s,a)\in\Dcal$ and in-neighborhood state-action pair $(s,\tilde a)$ such that $\|\tilde a-a\| \leq \epsilon$, the value difference of the deep Q function can be bounded as:
\begin{equation}
\label{eq:Extrapolation behavior}
    \|Q_{\theta}(s,\tilde a)-Q_{\theta}(s,a)\| \le  C (\sqrt{\min (\|s \oplus a\|, \|s \oplus \tilde a\|)}\sqrt{\epsilon}+2\epsilon),
\end{equation}
where $\oplus$ denotes the vector concatenation operation, and $C$ is a finite constant.
\end{lemma}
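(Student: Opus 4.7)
The strategy is to invoke the NTK linearization of the deep $Q$ function and decompose the value difference into a residual at initialization plus a feature-map drift term controlled by the NTK itself. Writing $x := s \oplus a$, $\tilde x := s \oplus \tilde a$, and $\phi_0(x) := \nabla_\theta Q_{\theta_0}(x)$, the NTK regime gives (in the infinite-width limit) the exact linearization $Q_\theta(x) = Q_{\theta_0}(x) + \langle \phi_0(x), \theta - \theta_0\rangle$, so that
\begin{equation*}
Q_\theta(s,\tilde a) - Q_\theta(s,a) = \underbrace{[Q_{\theta_0}(\tilde x) - Q_{\theta_0}(x)]}_{A} + \underbrace{\langle \phi_0(\tilde x) - \phi_0(x), \theta - \theta_0\rangle}_{B}.
\end{equation*}
For term $A$, a standard concentration argument at symmetric initialization yields $|A| = O(\|\tilde a - a\|) \le O(\epsilon)$, which I would absorb into the trailing $2\epsilon$ summand of the stated bound.

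The heart of the argument is controlling $B$ by Cauchy--Schwarz together with a standard NTK-regime bound $\|\theta - \theta_0\| = O(1)$ and the fundamental NTK identity
\begin{equation*}
\|\phi_0(x) - \phi_0(\tilde x)\|^2 = K(x,x) + K(\tilde x,\tilde x) - 2K(x,\tilde x).
\end{equation*}
For ReLU-type architectures the NTK takes the homogeneous arc-cosine form $K(x,x') = \|x\|\,\|x'\|\,\kappa(\angle(x,x'))$ with $\kappa$ bounded and only $1/2$-H\"older at zero angle. Since the two inputs share the state, $\tilde x - x = 0_{\dim(\Scal)} \oplus (\tilde a - a)$, giving $\|\tilde x - x\| = \|\tilde a - a\| \le \epsilon$. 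Plugging the arc-cosine form into the kernel identity and performing a small-angle expansion around $\tilde x \approx x$ yields
\begin{equation*}
K(x,x) + K(\tilde x,\tilde x) - 2K(x,\tilde x) \;\lesssim\; \min(\|x\|, \|\tilde x\|)\cdot \|\tilde x - x\| + \|\tilde x - x\|^2.
\end{equation*}
Taking square roots and absorbing constants produces the claimed $\sqrt{\min(\|s\oplus a\|,\|s\oplus\tilde a\|)}\sqrt{\epsilon} + 2\epsilon$ shape. The $\min$ arises because the symmetric expression $K(x,x) + K(\tilde x,\tilde x) - 2K(x,\tilde x)$ can be pivoted on either endpoint when carrying out the expansion, so one may pick the sharper of the two resulting linear-in-norm bounds.

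The main obstacle is the non-smooth small-angle behavior of the arc-cosine kernel: unlike a smooth kernel, $\kappa$ is only $1/2$-H\"older at the diagonal, so the expansion produces a $\sqrt{\|\tilde x - x\|}$ rather than $\|\tilde x - x\|$ factor, which is exactly why $\sqrt{\epsilon}$ (not $\epsilon$) appears on the dominant term; trigonometric bookkeeping is required to confirm the correct multiplicative dependence on $\|x\|$ and to verify that the subdominant piece is no worse than the $2\epsilon$ additive term. The constants folded into $C$ depend on depth, width scaling, and training horizon, which I would invoke from standard NTK references rather than reprove.
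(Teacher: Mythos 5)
You should know at the outset that the paper does not prove this lemma itself: its appendix ``proof'' is a one-line citation stating that the bound follows directly from Theorem~1 (or Lemma~4) of \citep{li2023when}, specialized to perturbations confined to the action coordinates. Your proposal therefore differs from the paper only in that it opens the black box and reconstructs the argument the paper delegates to that reference, and the reconstruction is structurally faithful to how such bounds are actually derived: NTK linearization, Cauchy--Schwarz against an $O(1)$ parameter displacement, the polarization identity $\|\phi_0(x)-\phi_0(\tilde x)\|^2=K(x,x)+K(\tilde x,\tilde x)-2K(x,\tilde x)$, and the homogeneous arc-cosine structure of the two-layer ReLU NTK (the paper's \cref{app_ass:ntk} restricts to exactly this architecture, so your choice of kernel is the right one). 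What the citation buys the paper is brevity and a rigorous anchor; what your route buys is an actual explanation of why the bound has its peculiar $\sqrt{\|x\|}\sqrt{\epsilon}+\epsilon$ shape. Two points would need tightening before your sketch stands as a proof. First, $\|\theta-\theta_0\|=O(1)$ is not free: it is the standard NTK convergence guarantee and requires a positive-definite kernel on the training set and bounded regression targets, which should appear as hypotheses (the cited theorem folds analogous conditions into $C$). Second, saying $\kappa$ is ``$1/2$-H\"older at zero angle'' is imprecise: the arc-cosine angular profile has a first-order kink, $\kappa(0)-\kappa(\vartheta)\asymp\vartheta$ (Lipschitz but not differentiable at the diagonal); it is the induced feature map, not $\kappa$, that is $1/2$-H\"older, and this is what makes the squared feature distance scale like $\min(\|x\|,\|\tilde x\|)\,\|\tilde x-x\|$ rather than $\|\tilde x-x\|^2$. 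Your displayed kernel bound is nonetheless the correct one, so this imprecision does not propagate, and your treatment of the initialization residual is sound (the degree-one arc-cosine NNGP covariance is smooth at the diagonal, and the term vanishes outright under antithetic initialization).
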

\cref{lem:ntk} is a direct corollary of Theorem 1 in \cite{li2023when}, specialized to the case of action extrapolation.
It demonstrates that, for any unseen action $\tilde a$, its Q value $Q_{\theta}(s,\tilde a)$ can be effectively controlled by a dataset action's Q value $Q_{\theta}(s,a)$ and the distance $\|\tilde a-a\|$. Specifically, a smaller neighborhood radius yields tighter control over the output of deep Q functions. 

Furthermore, \cref{lem:TV} demonstrates that, under mild continuity conditions on the transition dynamics~\citep{dufour2013finite,xiong2022deterministic}, the neighborhood constraint also helps to bound the degree of distribution shift.

\begin{proposition}[Distribution shift]
\label{lem:TV}
Let $\pi_1$ be a deterministic policy that satisfies the neighborhood constraint with threshold $\epsilon$. Assume that the transition dynamics $P$ is $K_P$-Lipschitz continuous:
$\forall s \in \Scal$, $\forall a_1,a_2 \in \Acal$, $\|P(s'|s,a_1)-P(s'|s,a_2)\| \leq K_P \|a_1-a_2\|$.
Then, there exists a policy $\pi_2$ satisfying the sample constraint such that:
\begin{equation}
\mathrm{D_{TV}}\left(d^{\pi_1}(\cdot) , d^{\pi_2}(\cdot)\right) \leq \frac{\gamma K_P \epsilon}{2(1 - \gamma)},
\end{equation}
where $d^{\pi}(s)=(1-\gamma) \sum_{t=0}^{\infty} \gamma^{t}  \mathbb{E}_\pi \left[ \mathbb{I}\left[s_{t}=s\right]\right]$ is the state occupancy induced by policy $\pi$.
\end{proposition}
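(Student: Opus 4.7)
The plan is to explicitly construct a sample-constrained $\pi_2$ that is close to $\pi_1$ at every state, bound the per-state transition kernel discrepancy via the Lipschitz assumption, and then promote this to a bound on the discounted state occupancy through the Bellman flow identity.

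First, I would define $\pi_2$ as a nearest-neighbor dataset policy: for each visited state $s$, let $\pi_2(s)\in\argmin_{a:(s,a)\in\Dcal}\|\pi_1(s)-a\|$. Since $\pi_1$ satisfies the neighborhood constraint with radius $\epsilon$, such a dataset action always lies within distance $\epsilon$ of $\pi_1(s)$, so $\|\pi_1(s)-\pi_2(s)\|\leq\epsilon$ pointwise, and by construction $\pi_2$ satisfies the sample constraint of \cref{def:SaC}. Combining this with the $K_P$-Lipschitz continuity of $P$ in the action argument (interpreted as an $L_1$ bound on the distribution $P(\cdot|s,a)$) gives, for every $s$,
\[
\|P(\cdot|s,\pi_1(s))-P(\cdot|s,\pi_2(s))\|_1\leq K_P\|\pi_1(s)-\pi_2(s)\|\leq K_P\epsilon.
\]
Writing $T^{\pi}(\cdot|s):=P(\cdot|s,\pi(s))$ for the induced state-to-state kernel, this yields a uniform bound $\|T^{\pi_1}(\cdot|s)-T^{\pi_2}(\cdot|s)\|_1\leq K_P\epsilon$.

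Next, I would exploit the Bellman flow identity $d^{\pi}=(1-\gamma)d_0+\gamma(T^{\pi})^\top d^{\pi}$ applied to both $\pi_1$ and $\pi_2$. Subtracting and rearranging yields
\[
d^{\pi_1}-d^{\pi_2}=\gamma\bigl(I-\gamma(T^{\pi_2})^\top\bigr)^{-1}\bigl[(T^{\pi_1})^\top-(T^{\pi_2})^\top\bigr]\,d^{\pi_1}.
\]
Expanding the resolvent as a Neumann series and using that each $(T^{\pi_2})^\top$ preserves the total $L_1$ mass of non-negative measures, I obtain the standard telescoping bound
\[
\|d^{\pi_1}-d^{\pi_2}\|_1\leq\frac{\gamma}{1-\gamma}\,\mathbb{E}_{s\sim d^{\pi_1}}\bigl[\|T^{\pi_1}(\cdot|s)-T^{\pi_2}(\cdot|s)\|_1\bigr]\leq\frac{\gamma K_P\epsilon}{1-\gamma}.
\]
Converting the $L_1$ norm on probability measures to total variation via $\mathrm{D_{TV}}=\tfrac{1}{2}\|\cdot\|_1$ finally produces the claimed $\frac{\gamma K_P\epsilon}{2(1-\gamma)}$.

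The main obstacle will be the construction of $\pi_2$ at states that do \emph{not} appear in $\Dcal$: the sample constraint of \cref{def:SaC} is only formally defined at dataset states, so I would either restrict attention to a coverage-compatible state subset or extend $\pi_2$ arbitrarily on the unreachable remainder and argue this extension is irrelevant to the reachable occupancy. A secondary subtlety is pinning down the norm used in the Lipschitz assumption; adopting the $L_1$ norm on the distribution $P(\cdot|s,a)$ (equivalently TV up to a factor of two) is exactly what yields the $\tfrac{1}{2}$ in the final bound, so this convention must be flagged explicitly at the start of the proof.
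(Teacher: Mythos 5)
Your proposal is correct and follows essentially the same route as the paper's proof: construct a nearest-dataset-action policy $\pi_2$ (extended to agree with $\pi_1$ off the dataset), bound the induced kernel discrepancy by $K_P\epsilon$ via the Lipschitz assumption, and propagate this through the Bellman flow identity to pick up the $\gamma/(1-\gamma)$ factor and the $1/2$ from total variation. The only cosmetic difference is that you invert $(I-\gamma (T^{\pi_2})^\top)$ via a Neumann series, whereas the paper derives the equivalent self-bounding inequality $\|\Delta\|_1 \leq \gamma\|\Delta\|_1 + \gamma K_P \max_s\|\pi_1(s)-\pi_2(s)\|$ and rearranges.
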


\paragraph{Adaptive neighborhoods.}
The neighborhood constraint is highly flexible and can, in practice, achieve pointwise conservatism by adapting the neighborhood radius for each data point, enabling the design of an adaptive neighborhood constraint. Considering real-world data patterns, expert data typically clusters within a narrow distribution~\citep{argall2009survey,ericsson1993role}, necessitating tighter constraints to mitigate extrapolation errors, while suboptimal data tends to be more dispersed and thus benefits from looser constraints that facilitate policy improvement. 
Inspired by this idea, we propose a concrete instantiation of adaptive neighborhoods in \cref{def:ANC}, where per-sample radius is set as $\epsilon \exp(-\alpha A(s,a))$.

\begin{definition}[Adaptive neighborhood constraint]
\label{def:ANC}
The selected action in the Bellman target is restricted to the adaptive neighborhood set $\mathcal{C}_{\mathrm{AN}}(s):=\{\tilde a \in \Acal~|~\|\tilde a-a\| \leq \epsilon \exp(-\alpha A(s,a)),(s,a) \in \Dcal \}$, where $A$ denotes the advantage function and $\alpha$ is an inverse temperature parameter that modulates the sensitivity of the neighborhood radius to advantage values.
\end{definition}

This adaptive neighborhood constraint assigns larger neighborhood radii to dataset actions with low advantage, thereby promoting a broader search over the action space and further mitigating the impact of low-quality data. Conversely, dataset actions with high advantage are assigned smaller neighborhood radii to more effectively reduce the overall extrapolation error.
In practice, advantage estimation errors are typically not a concern for two reasons: (1) In-distribution estimation: the advantage is computed only on dataset points $(s,a) \in \mathcal D$, where estimates are relatively reliable; (2) Qualitative use: the purpose of using advantage is to distinguish actions qualitatively, and the exponential form is merely a soft heuristic to bias the radius, without requiring precise values.

\subsection{Adaptive Neighborhood-Constrained Q Learning}
\label{sec:ANQ}

In the following, we develop an efficient bilevel optimization framework to achieve Q learning under the adaptive neighborhood constraint. Specifically, we aim to minimize the following Q learning loss:
\begin{equation}
\label{eq:ANQ loss}
L_{\mathrm{ANQ}}(\theta) = \mathbb{E}_{(s, a, s')\sim \mathcal{D}}\left[\left(Q_\theta(s,a) - R(s,a) - \gamma \max_{a'\in \mathcal{C}_{\mathrm{AN}}(s')}Q_{\theta'}(s',a')\right)^2\right].
\end{equation}

\paragraph{Bilevel optimization.} The primary challenge in constrained Q learning lies in enforcing $\max_{a'\in \mathcal{C}(s')}$ in the Bellman target. While the support constraint $\mathcal{C}_{\mathrm{Supp}}$ typically necessitates accurate modeling of the behavior policy, we demonstrate that the adaptive neighborhood constraint $\mathcal{C}_{\mathrm{AN}}$ can be effectively enforced by decomposing the objective into a bilevel optimization structure:
\begin{equation}
\label{eq:bilevel}
\max_{a\in \mathcal{C}_{\mathrm{AN}}(s)} Q(s,a), ~ \forall s \in \mathcal{D} \iff 
\begin{aligned}
    & \max_{a\in \Dcal (s)} Q(s,a+\delta_{sa}), ~ \forall s \in \mathcal{D} \\
    \text{s.t. }& \delta_{sa} = \argmax_{\|\delta\| \leq \epsilon \exp(-\alpha A(s,a))} Q(s,a+\delta), ~ \forall (s,a) \in \mathcal{D},
\end{aligned}
\end{equation}
where we use $\Dcal (s)$ to denote the empirical action set observed in the dataset for a given state $s\in \mathcal{D}$.

\textbf{The inner maximization} in Eq.~\eqref{eq:bilevel} optimizes the Q function separately within each dataset action's neighborhood.
To this end, we introduce an auxiliary policy $\mu_\omega$ that takes state-action pairs $(s,a)$ from the dataset as input and outputs action variations $\delta$. This formulation enables straightforward enforcement of the adaptive neighborhood constraint by restricting the norm of $\mu_\omega(s,a)$ to stay within the bound $\epsilon \exp(-\alpha A(s,a))$. Practically, we multiply both sides of the constraint inequality by $\exp(\alpha A(s,a))$ to maintain a constant constraint threshold: $\exp(\alpha A(s,a)) \|\mu_\omega(s,a)\|\leq \epsilon$.
Consequently, we optimize the Q function with respect to $\mu_\omega$ to seek the optimal action within the adaptive neighborhood of each dataset action, according to the following objective:
\begin{equation}
\label{eq:deltaa}
\max_{\mu_\omega} Q_\theta(s,a+\mu_\omega(s,a)) ~~\text{s.t. } \exp(\alpha A(s,a)) \|\mu_\omega(s,a)\|\leq \epsilon, ~\forall(s,a)\in \mathcal{D}.
\end{equation}
We reformulate the constrained optimization problem into an unconstrained one using a Lagrange multiplier $\lambda \in \mathbb{R}^+$. In addition, we introduce a state value function $V_\psi(s)$,  whose training objective will be specified later, and use the difference $Q_{\theta'} - V_\psi$ to compute the advantage function. Accordingly, we optimize the following objective for the inner maximization:
\begin{equation}
\label{eq:mu}
\max_{\mu_\omega} \mathbb{E}_{(s,a)\sim \mathcal{D}} \left[ Q_\theta(s,a+\mu_\omega(s,a)) - \lambda \exp(\alpha (Q_{\theta'}(s,a) - V_\psi(s))) \|\mu_\omega(s,a)\| \right].
\end{equation}

\textbf{The outer maximization} in Eq.~\eqref{eq:bilevel} searches over all dataset actions on a given state and seeks the one whose corresponding neighborhood yields the highest Q value. 
To achieve this objective, we first sample state-action pairs from the dataset and refine the actions by adding the outputs of the trained auxiliary policy, thereby simulating the sampling of the optimized actions across the neighborhoods. We then employ expectile regression~\citep{kostrikov2022offline} to implicitly maximize the Q function over these optimized actions. Specifically, we fit a $V$ function with the following asymmetric squared error loss, treating the Q values of the optimized actions as regression targets:
\begin{equation}
\label{eq:V}
\min_{V_\psi} \underset{(s,a)\sim \mathcal{D}}{\mathbb{E}} \left[L^\tau_2 \left(Q_{\theta'}(s,a+\mu_{\omega'}(s,a)) - V_\psi(s) \right) \right],
\end{equation}
where $L_2^\tau(x) = |\tau-\mathbbm{1}(x < 0)|x^2$, $\tau \in (0,1)$, $Q_{\theta'}$ and $\mu_{\omega'}$ are the target $Q$ function and target auxiliary policy, whose parameters are updated via Polyak averaging~\citep{mnih2015human}.

For $\tau \approx 1$, $V_\psi(s)$ captures the maximum Q value within the adaptive neighborhood set $\mathcal{C}_{\mathrm{AN}}(s)$. By substituting $\max_{a'\in \mathcal{C}_{\mathrm{AN}}(s')}Q_{\theta'}(s',a')$ in Eq.~\eqref{eq:ANQ loss} with $V_\psi(s')$, adaptive neighborhood-constrained Q learning is achieved based on the following loss:
\begin{equation}
\label{eq:Q}
\min_{Q_\theta} \mathbb{E}_{(s,a,s')\sim \mathcal{D}}\left[\left(Q_\theta(s,a) - R(s,a) - \gamma V_\psi(s')\right)^2\right].
\end{equation}

\paragraph{A radius-agnostic framework.}
Although our Q learning algorithm is presented specifically for the adaptive neighborhood in \cref{def:ANC}, i.e., using the per-sample radius $\epsilon \exp(-\alpha A(s,a))$, the overall framework is general and can accommodate arbitrary radius schemes. Specifically, one can simply replace $\epsilon \exp(-\alpha A(s,a))$ in \cref{def:ANC} (and Eq.~\eqref{eq:bilevel}) with $\epsilon f(s,a)$ to define a generic per-sample neighborhood radius, where $f: \mathcal S \times \mathcal A \to \mathbb R^+$ is an arbitrary function that modulates the radius. Correspondingly, in Eq.~\eqref{eq:deltaa}, $\exp(\alpha A(s,a))$ becomes $1/f(s,a)$, and Eq.~\eqref{eq:mu} becomes:
\begin{equation}
\max_{\mu_\omega} \mathbb E_{(s,a)\sim \mathcal{D}} [Q_\theta(s,a+\mu_\omega(s,a)) - \lambda \|\mu_\omega(s,a)\| / f(s,a)].
\end{equation}
With all other equations unchanged, the resulting algorithm supports arbitrary neighborhood schemes.

\subsection{Policy Extraction via Weighted Regression Toward Optimized Actions}
\label{sec:Policy Extraction}

While our algorithm enables Q learning under the adaptive neighborhood constraint, it does not explicitly derive the corresponding policy, thereby requiring a separate policy extraction step. To this end, we employ the weighed behavior cloning method~\citep{emmons2021rvs,chen2020bail} and, rather than imitating the actions in the dataset, we instead imitate the actions that have been refined through the auxiliary policy, which represents the optimal actions within the adaptive neighborhoods. Moreover, we set the weights as the exponentiated advantage function~\citep{wang2018exponentially,peng2019advantage,nair2020awac,wang2020critic}. Consequently, the final policy $\pi_\phi: \Scal \to \Acal$ is extracted according to the following loss:
\begin{equation}
\label{eq:pi}
\min_{\pi_\phi} \mathbb{E}_{(s,a)\sim \mathcal{D}} \exp (\beta(Q_{\theta'}(s,a+\mu_\omega(s,a)) - V_\psi(s))) \|a+\mu_\omega(s,a)-\pi_\phi(s)\|_2^2,
\end{equation}
where $\beta$ is an inverse temperature and $Q_{\theta'} - V_\psi$ computes the advantage function.

\begin{wrapfigure}{R}{0.53\textwidth}
\vspace{-0.7cm}
\begin{minipage}{0.48\textwidth}
\begin{algorithm}[H]
\caption{\algo}
\label{alg}
\begin{algorithmic}[1] 
\STATE Initialize policy $\pi_\phi$, auxiliary policy $\mu_{\omega}$, target auxiliary policy $\mu_{\omega'}$, Q-network $Q_\theta$, target Q-network $Q_{\theta'}$, and V-network $V_\psi$.
\FOR{each gradient step}
\STATE Update $\psi$ by minimizing Eq.~(\ref{eq:V})
\STATE Update $\theta$ by minimizing Eq.~(\ref{eq:Q})
\STATE Update $\omega$ by maximizing Eq.~(\ref{eq:mu})
\STATE Update $\phi$ by maximizing Eq.~(\ref{eq:pi})
\STATE Update target networks: 
$\theta' \leftarrow (1-\xi){\theta'} + \xi\theta$, $\omega' \leftarrow (1-\xi){\omega'} + \xi\omega$
\ENDFOR
\end{algorithmic}
\end{algorithm}
\end{minipage}
\vspace{-0.1cm}
\end{wrapfigure}

\paragraph{Remark.}
This policy extraction step, which does not interfere with the Q learning process described in \cref{sec:ANQ}, also constitutes a key distinction from existing regression-based policy learning objectives, such as those employed in AWR~\citep{peng2019advantage}, AWAC~\citep{nair2020awac}, CRR~\citep{wang2020critic}, $10\%$ BC~\citep{chen2021decision}, IQL~\citep{kostrikov2022offline}, and SQL~\citep{xu2023offline}, all of which perform weighted regression toward the dataset actions. In contrast, our policy learning objective performs weighted regression toward the optimized actions within the adaptive neighborhoods. 
This enables the trained policy to select actions superior to those in the dataset, while also significantly mitigating the adverse effects of suboptimal dataset actions.

Integrating all components, we present our final algorithm in \cref{alg}.

\section{Experiments}
We conduct experiments to evaluate the performance and properties of the proposed approach \algo. Experimental details and extended results are provided in \cref{app_sec:experimental_details,app_sec:experimental_results}, respectively.

\subsection{Benchmark Results}
\paragraph{Tasks.} We assess \algo on two distinct task suites from D4RL~\cite{fu2020d4rl}: the Gym-MuJoCo locomotion domains and the challenging AntMaze domains. The AntMaze tasks involve sparse rewards and require the ant agent to combine segments of suboptimal trajectories to reach the maze's goal.
\paragraph{Baselines.} Our offline RL baselines span various constraint categories. For density constraints, we compare to TD3BC~\cite{fujimoto2021minimalist}, CQL~\cite{kumar2020conservative}, and AWAC~\cite{nair2020awac}, where CQL and AWAC essentially enforce a density constraint as analyzed in Theorem 3.5 of \citep{kumar2020conservative} and Section 3 of \citep{mao2023supporteda}, respectively. For support constraints, we include BCQ~\cite{fujimoto2019off}, BEAR~\cite{kumar2019stabilizing}, and SPOT~\citep{wu2022supported}. For sample constraints, we compare against OneStep RL~\cite{brandfonbrener2021offline} and IQL~\cite{kostrikov2022offline}. We also include the sequence-modeling approach DT~\cite{chen2021decision}.
\paragraph{Comparisons.} Aggregated results are reported in \cref{tab:d4rl}. On the Gym locomotion tasks, \algo outperforms existing methods on most tasks and achieves the highest overall score. On the challenging AntMaze tasks, \algo surpasses the baselines by a considerable margin, particularly in the most complex large maze settings. The learning curves are provided in \cref{app_sec:offline curves}.
We also extend our evaluation in \cref{app_sec:additional comparisons} by comparing ANQ with additional recent SOTA algorithms.
\paragraph{Runtime.}
We test the runtime of \algo and some baseline methods on a GeForce RTX 3090. As shown in \cref{app_sec:computational cost}, \algo is among the fastest tier of offline RL algorithms, on par with efficient baselines such as AWAC, IQL, and TD3BC, with a detailed analysis provided in the same section.

\begin{table}[t]
  \caption{Averaged normalized scores on Gym locomotion and Antmaze tasks over five random seeds. 
  m = medium, m-r = medium-replay, m-e = medium-expert, e = expert, r = random; u = umaze, u-d = umaze-diverse, m-p = medium-play, m-d = medium-diverse, l-p= large-play, l-d = large-diverse.
  }
  \vspace{-1mm}
  \label{tab:d4rl}
  \small
    \footnotesize
  \centering
  \setlength{\tabcolsep}{3.7pt}
  \begin{adjustbox}{max width=390pt}
  \begin{tabular}{@{}l rrrrrrrrrr r@{}}
    \toprule
    Dataset-v2 & BCQ & BEAR & DT & AWAC & OneStep & TD3BC & CQL & IQL & SPOT & \algo~(Ours) \\ \midrule
    halfcheetah-m & 46.6 & 43.0 & 42.6 & 47.9 & 50.4 & 48.3 & 47.0 & 47.4 & 58.4 & \textbf{61.8$\pm$1.4} \\ 
    hopper-m & 59.4 & 51.8 & 67.6 & 59.8 & 87.5 & 59.3 & 53.0 & 66.2 & 86.0 & \textbf{100.9$\pm$0.6} \\ 
    walker2d-m & 71.8 & -0.2 & 74.0 & 83.1 & 84.8 & 83.7 & 73.3 & 78.3 & \textbf{86.4} & 82.9$\pm$1.5 \\ 
    halfcheetah-m-r & 42.2 & 36.3 & 36.6 & 44.8 & 42.7 & 44.6 & 45.5 & 44.2 & 52.2 & \textbf{55.5$\pm$1.4} \\ 
    hopper-m-r & 60.9 & 52.2 & 82.7 & 69.8 & 98.5 & 60.9 & 88.7 & 94.7 & \textbf{100.2} & \textbf{101.5$\pm$2.7} \\ 
    walker2d-m-r & 57.0 & 7.0 & 66.6 & 78.1 & 61.7 & 81.8 & 81.8 & 73.8 & \textbf{91.6} & \textbf{92.7$\pm$3.8} \\ 
    halfcheetah-m-e & \textbf{95.4} & 46.0 & 86.8 & 64.9 & 75.1 & 90.7 & 75.6 & 86.7 & 86.9 & \textbf{94.2$\pm$0.8} \\ 
    hopper-m-e & \textbf{106.9} & 50.6 & \textbf{107.6} & 100.1 & \textbf{108.6} & 98.0 & 105.6 & 91.5 & 99.3 & \textbf{107.0$\pm$4.9} \\ 
    walker2d-m-e & 107.7 & 22.1 & 108.1 & 110.0 & \textbf{111.3} & 110.1 & 107.9 & 109.6 & \textbf{112.0} & \textbf{111.7$\pm$0.2} \\ 
    halfcheetah-e & 89.9 & 92.7 & 87.7 & 81.7 & 88.2 & \textbf{96.7} & \textbf{96.3} & \textbf{95.0} & 94.8 & \textbf{95.9$\pm$0.4} \\ 
    hopper-e & 109.0 & 54.6 & 94.2 & 109.5 & 106.9 & 107.8 & 96.5 & 109.4 & \textbf{111.0} & \textbf{111.4$\pm$2.5} \\ 
    walker2d-e & 106.3 & 106.6 & 108.3 & \textbf{110.1} & \textbf{110.7} & \textbf{110.2} & 108.5 & 109.9 & 109.9 & \textbf{111.8$\pm$0.1} \\ 
    halfcheetah-r & 2.2 & 2.3 & 2.2 & 6.1 & 2.3 & 11.0 & 17.5 & 13.1 & \textbf{25.4} & \textbf{24.9$\pm$1.0} \\ 
    hopper-r & 7.8 & 3.9 & 5.4 & 9.2 & 5.6 & 8.5 & 7.9 & 7.9 & 23.4 & \textbf{31.1$\pm$0.2} \\ 
    walker2d-r & 4.9 & \textbf{12.8} & 2.2 & 0.2 & 6.9 & 1.6 & 5.1 & 5.4 & 2.4 & 11.2$\pm$9.5 \\ \midrule
    locomotion total & 968.0 & 581.7 & 972.6 & 975.3 & 1041.2 & 1013.2 & 1010.2 & 1033.1 & 1139.9 & \textbf{1194.5} \\ 
    \midrule
    antmaze-u & 78.9 & 73.0 & 54.2 & 80.0 & 54.0 & 73.0 & 82.6 & 89.6 & \textbf{93.5} & \textbf{96.0$\pm$1.6} \\ 
    antmaze-u-d & 55.0 & 61.0 & 41.2 & 52.0 & 57.8 & 47.0 & 10.2 & 65.6 & 40.7 & \textbf{80.2$\pm$1.8} \\ 
    antmaze-m-p & 0.0 & 0.0 & 0.0 & 0.0 & 0.0 & 0.0 & 59.0 & \textbf{76.4} & \textbf{74.7} & \textbf{76.2$\pm$3.3} \\ 
    antmaze-m-d & 0.0 & 8.0 & 0.0 & 0.2 & 0.6 & 0.2 & 46.6 & 72.8 & \textbf{79.1} & \textbf{77.2$\pm$6.1} \\ 
    antmaze-l-p & 6.7 & 0.0 & 0.0 & 0.0 & 0.0 & 0.0 & 16.4 & 42.0 & 35.3 & \textbf{56.2$\pm$4.9} \\ 
    antmaze-l-d & 2.2 & 0.0 & 0.0 & 0.0 & 0.2 & 0.0 & 3.2 & 46.0 & 36.3 & \textbf{55.8$\pm$4.0} \\ \midrule
    antmaze total & 142.8 & 142.0 & 95.4 & 132.2 & 112.6 & 120.2 & 218.0 & 392.4 & 359.6 & \textbf{441.6} \\ 
    \bottomrule
  \end{tabular}
  \end{adjustbox}
  \vspace{-1mm}
\end{table}

\subsection{Noisy Data Results}

We examine the robustness of various constraint types under noisy data conditions. Specifically, we construct noisy datasets by mixing the random and expert datasets at varying ratios, thereby simulating real-world scenarios such as imperfect demonstrations in robotics or suboptimal data collection in autonomous systems. We then evaluate the performance of representative algorithms, including CQL~(density), IQL~(sample), SPOT~(support), and \algo~(neighborhood).

As presented in \cref{fig:noisy}, \algo generally outperforms the other algorithms across expert ratios, and its performance advantage becomes more pronounced as the proportion of expert data decreases. As analyzed in \cref{sec:constraints}, the density constraint is sensitive to the overall quality of the behavior policy, which tends to be low in noisy datasets, while the support constraint often struggles with modeling the multi-modal behavior policy distribution inherent in such datasets. In contrast, the adaptive neighborhood constraint employed by \algo exhibits greater robustness to noisy data. Moreover, we evaluate \algo on such noisy datasets with varying inverse temperature $\alpha$ that controls the adaptiveness of neighborhood radius. The results in \cref{fig:noisy_temp_hc} demonstrate that, compared with the uniform neighborhood constraint ($\alpha=0$), the adaptive neighborhood constraint further mitigates the adverse effects of low-quality data in the datasets and exhibits greater robustness to such data.

\subsection{Limited Data Results}
We also investigate the robustness of these constraint types in limited data settings. To this end, we create reduced datasets by randomly discarding some portion of transitions from the AntMaze datasets. This setup mimics practical scenarios in which data is rare or partially missing, such as in healthcare applications. Again, we evaluate the performance of IQL, SPOT, and \algo, omitting CQL due to its consistently inferior performance on Antmaze tasks as reported in \cref{tab:d4rl}.

As shown in \cref{fig:small}, \algo demonstrates superior performance across nearly all discard ratios, with the performance gap widening as the amount of available data decreases. In such limited data settings, support constraint methods face even more difficulties in modeling the behavior policy due to sample scarcity, whereas sample constraint methods risk being overly conservative because of reduced coverage of near-optimal actions. In contrast, \algo bypasses the need to model the behavior policy and leverages generalization to attain superior performance beyond the offline dataset. Furthermore, we evaluate \algo on reduced datasets with varying Lagrange multiplier $\lambda$, which is inversely proportional to the overall radius of the neighborhoods. The results in \cref{fig:noisy_temp_hc} show that an appropriately large neighborhood is crucial for achieving good performance in such limited data scenarios, further showcasing the benefit and flexibility of \algo over sample constraint methods.

\begin{figure}[t]
    \centering
    \hspace{-3.5mm}
    \subfigure[Evaluation of algorithms with various constraint types on noisy datasets]{
    \label{fig:noisy}
    \includegraphics[width=0.763\textwidth]{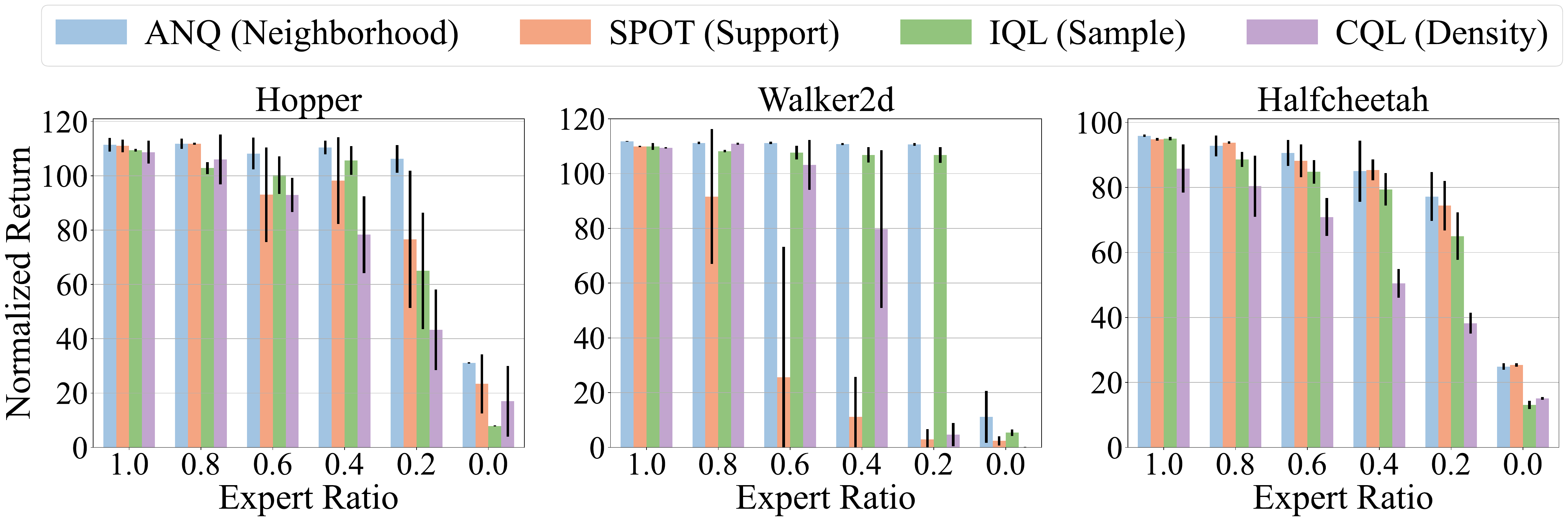}
    }\hspace{-1.5mm}
    \subfigure[\algo with varying $\alpha$]{
    \label{fig:noisy_temp_hc}
    \includegraphics[width=0.224\textwidth]{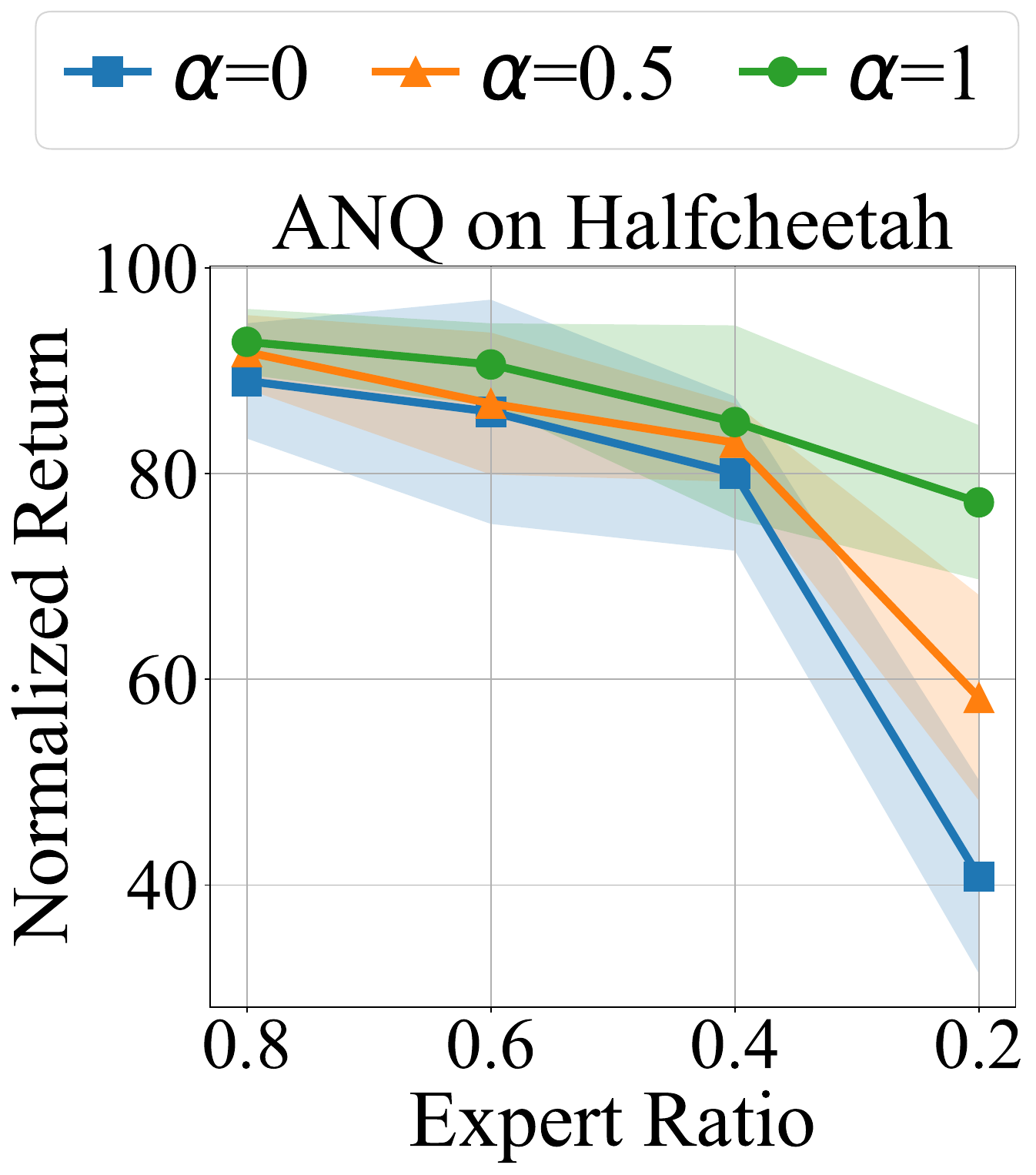}
    }\hspace{-2mm}
    \vspace{-1mm}
    \caption{(a) Evaluation on noisy datasets over five random seeds. (b) Evaluation of \algo on noisy datasets with varying inverse temperature $\alpha$ that determines the adaptiveness of neighborhood radius.
    }
    \vspace{-1mm}
\end{figure}

\begin{figure}[t]
    \centering
    \hspace{-3.5mm}
    \subfigure[Evaluation of algorithms with various constraint types on reduced datasets]{
    \label{fig:small}
    \includegraphics[width=0.763\textwidth]{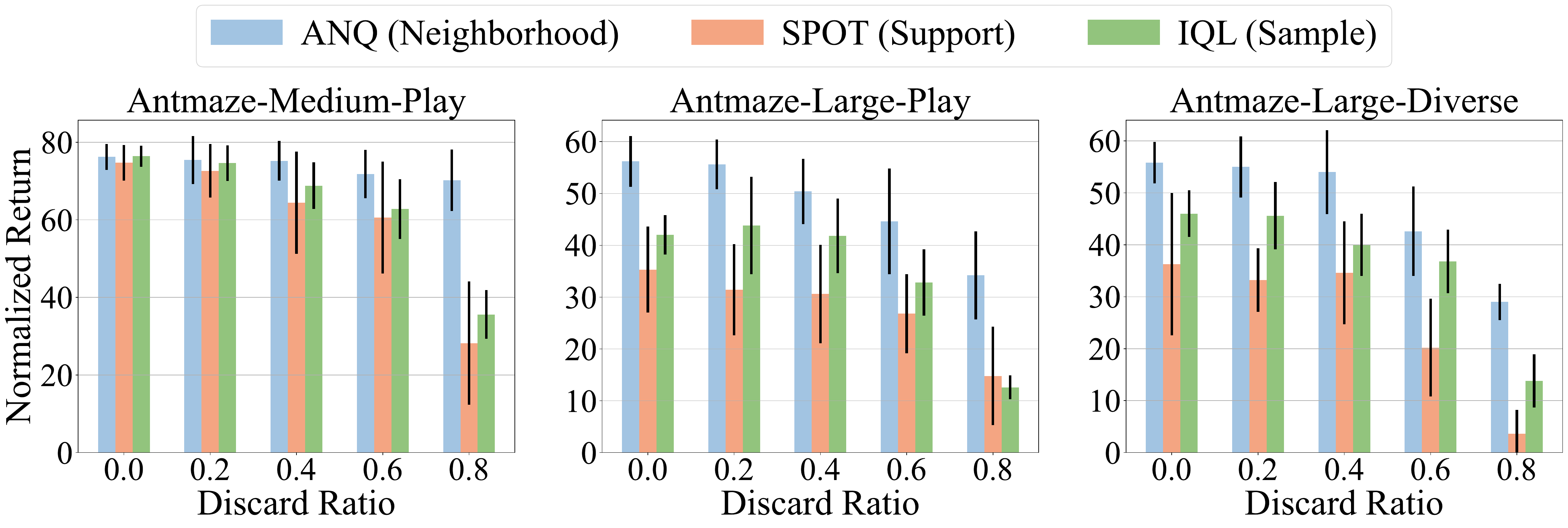}
    }\hspace{-1.5mm}
    \subfigure[\algo with varying $\lambda$]{
    \label{fig:small_lam_amp}
    \includegraphics[width=0.218\textwidth]{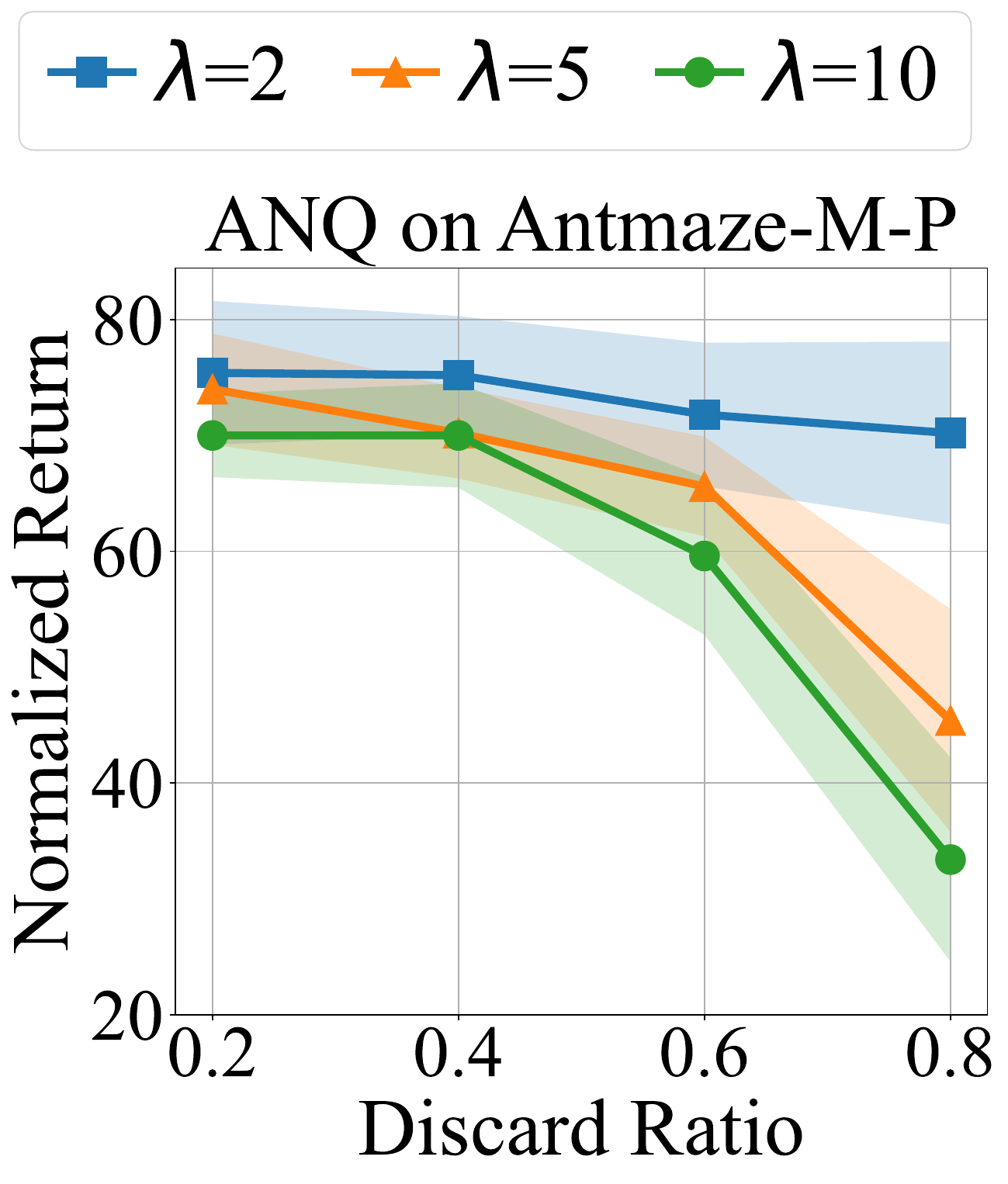}
    }\hspace{-2mm}
    \vspace{-1mm}
    \caption{(a) Evaluation on reduced datasets over five random seeds. (b) Evaluation of \algo on reduced datasets with varying Lagrange multiplier $\lambda$ that controls the overall radius of neighborhoods.
    }
\end{figure}

\subsection{Ablation Study}

\begin{figure}[t]
    \centering
    \includegraphics[width=\textwidth]{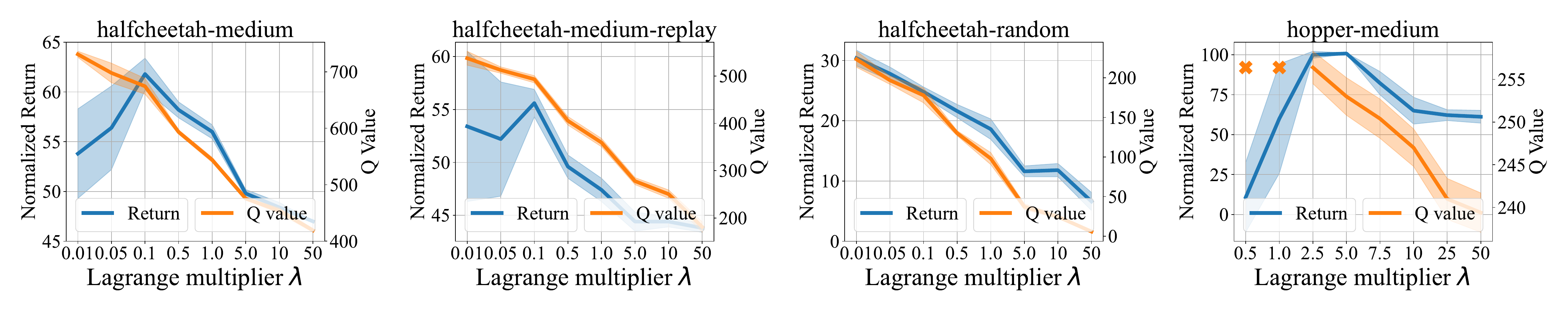}
    \vspace{-5mm}
    \caption{
    Performance and Q values of \algo with varying Lagrange multiplier $\lambda$ over five random seeds. The crosses $\times$ mean that the value functions diverge in some seeds. As $\lambda$ decreases, \algo enables larger overall neighborhood radii, resulting in higher and probably divergent learned Q values. A moderate $\lambda$ (neighborhood constraint) is crucial for achieving superior performance.
    }
    \vspace{-1mm}
    \label{fig:lam}
\end{figure}

\begin{figure}[t]
    \centering
    \includegraphics[width=\textwidth]{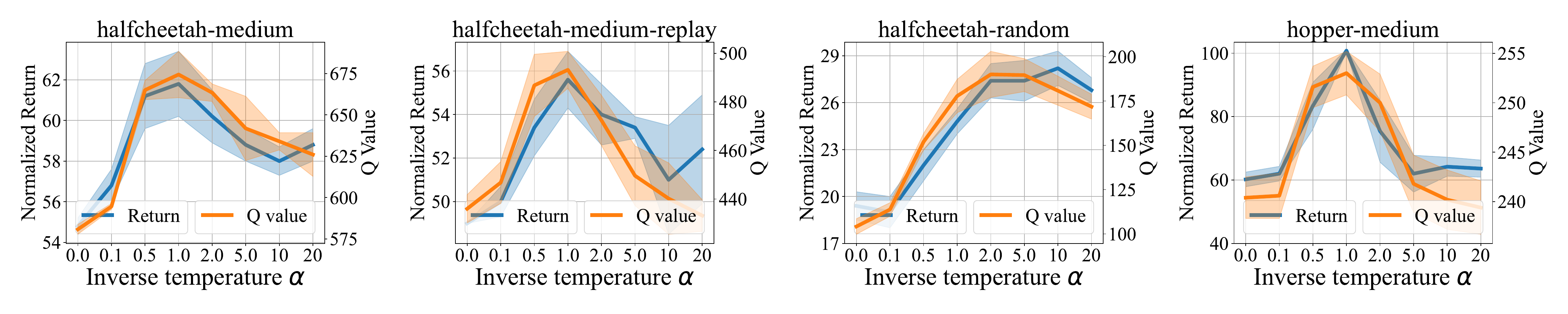}
    \vspace{-5mm}
    \caption{
    Performance and Q values of \algo with varying inverse temperature $\alpha$ over five random seeds. An appropriately large $\alpha$ (adaptive neighborhoods) yields enhanced performance.
    }
    \label{fig:temp}
\end{figure}

\paragraph{Lagrange multiplier $\lambda$.}
The Lagrange multiplier $\lambda$ controls the overall neighborhood radius in \algo. We vary $\lambda$ and present the learned Q values and performance across various tasks in \cref{fig:lam}. As $\lambda$ decreases from a sufficiently large value, \algo enables larger neighborhoods, resulting in higher and possibly divergent Q values, and performance also exhibits a rise-then-fall trend. 
Note that \algo with $\lambda=0$ and $\lambda=\infty$ approximately corresponds to Q learning without any constraint and with the sample constraint, respectively. Therefore, the results provide evidence that \algo not only effectively suppresses extrapolation error, but also mitigates the over-conservatism of the sample constraint.

\paragraph{Inverse temperature $\alpha$.}
The inverse temperature $\alpha$ determines how the neighborhood radius adapts to the action advantage, where $\alpha=0$ corresponds to a fixed neighborhood radius. The results in \cref{fig:temp} demonstrate that an appropriately large $\alpha$ leads to enhanced performance, validating our design of advantage-based adaptive neighborhoods. However, an excessively large $\alpha$~($\alpha=20$) may degrade performance, likely due to the increased variance of the learning objective.

\section{Conclusion and Limitations}
\label{sec:Conclusion and Limitations}

This work focuses on developing action-selection constraints to address the OOD issue in offline RL. To overcome the identified limitations of existing approaches, we propose the flexible neighborhood constraint and the corresponding algorithm \algo, which mitigates the over-conservatism inherent in the density and sample constraints, and approximates the least restrictive support constraint without challenging behavior modeling. Empirical results demonstrate that \algo achieves SOTA performance on standard offline RL benchmarks and exhibits enhanced robustness to noisy or limited data.

At the algorithmic level, this work develops a general framework for achieving pointwise conservatism by adapting the neighborhood radius for each data point. In particular, the practical algorithm \algo represents one instantiation of adaptive neighborhoods, using data point quality as the adaptation criterion. However, this design is not necessarily optimal; incorporating additional information, such as uncertainty quantification, could potentially lead to more effective neighborhood construction.

\section*{Acknowledgment}
We thank the anonymous reviewers for feedback on an early version of this paper.
This work was supported by the National Key R\&D Program of China under Grant 2018AAA0102801.

\bibliographystyle{plainnat}
\bibliography{neurips_2025.bib}

\clearpage

\appendix

\newpage

\section{Extended Related Work}
\label{app_sec:related work}
\paragraph{Model-free offline RL.}
Offline RL aims to learn policies from a fixed dataset without any further interaction with the environment~\cite{lange2012batch,levine2020offline}. Standard off-policy methods often struggle in this setting due to extrapolation errors induced by OOD actions~\cite{fujimoto2019off}. To address this issue, a variety of algorithms have been developed, broadly falling into model-free and model-based categories.
Within the model-free family, value regularization methods promote conservative value estimation, either by explicitly penalizing overestimated Q values~\cite{kumar2020conservative,kostrikov2021offline,ma2021conservative,xie2021bellman,cheng2022adversarially,shao2023counterfactual,mao2023supportedb}, or by employing ensembles to capture epistemic uncertainty~\cite{an2021uncertainty,bai2022pessimistic,yang2022rorl}. 
Policy constraint approaches, by contrast, aim to keep the learned policy close to the behavior policy, accomplished either explicitly through divergence penalties~\cite{wu2019behavior,kumar2019stabilizing,jaques2019way,fujimoto2021minimalist,wu2022supported}, implicitly via weighted behavior cloning~\cite{chen2020bail,peng2019advantage,nair2020awac,wang2020critic,mao2023supporteda}, or through carefully designed policy parameterizations~\cite{fujimoto2019off,ghasemipour2021emaq,zhou2021plas}.
More recently, these methods have increasingly shifted toward learning the optimal policy within the support of the behavior policy, offering theoretical guarantees and reduced sensitivity to the overall quality of datasets~\citep{wu2022supported,mao2023supporteda,mao2023supportedb,ma2024iteratively}.
Another direction emphasizes in-sample learning, which avoids estimating values of unobserved actions by constructing Bellman targets solely from dataset samples~\cite{brandfonbrener2021offline,ma2021offline,kostrikov2022offline,xiao2023the,xu2023offline,garg2023extreme,zhang2023insample}. For example, OneStep RL~\cite{brandfonbrener2021offline} employs SARSA-style value updates~\cite{sutton2018reinforcement} and performs a single round of policy improvement. IQL~\cite{kostrikov2022offline} extends this idea using expectile regression within the SARSA update, enabling multi-step dynamic programming.
In contrast to strict in-sample learning, some methods propose to exploit mild generalization beyond the dataset to further boost performance~\citep{mao2024doubly,ma2023reining}.
Orthogonal to the above, another influential branch of work formulates offline RL as conditional sequence modeling~\citep{chen2021decision}. These methods employ causal transformers conditioned on the desired return, past states, and actions to autoregressively generate future actions~\citep{chen2021decision,wu2023elastic,ma2024rethinking,hu2024q}, bypassing the need for bootstrapping in long-term credit assignment~\citep{pignatelli2023survey,qu2025latent}.

\paragraph{Model-based offline RL.}
Model-based offline RL approaches construct a learned model of the environment dynamics, which is subsequently used to generate synthetic trajectories for policy improvement~\citep{sutton1991dyna, janner2019trust, kaiser2019model}. Given the challenges of distributional shift in offline settings, algorithms such as MOPO~\cite{yu2020mopo} and MOReL~\cite{kidambi2020morel} incorporate mechanisms to quantify the model uncertainty and penalize high-uncertainty state-action pairs, thereby discouraging the exploitation of unreliable model predictions. MOBILE~\cite{sun2023model} further advances this line of work by proposing an uncertainty quantification approach based on inconsistencies in Bellman estimations within an ensemble of learned dynamics models.
Some algorithms bridge the gap between model-based and model-free paradigms by incorporating similar regularization techniques. For instance, COMBO~\cite{yu2021combo} integrates value penalization into model-based rollouts, while BREMEN~\cite{matsushima2021deploymentefficient} emphasizes behavior-regularized policy optimization. More recently, SCAS~\citep{mao2024offline} proposes a generic model-based regularizer for model-free offline RL that unifies OOD state correction and OOD action suppression. Despite their promise, model-based methods often entail substantial computational costs~\cite{janner2019trust}, and their efficacy is highly contingent on the fidelity of the learned dynamics model~\cite{moerland2023model}.

\paragraph{Neighborhood in offline RL.}
This work formalizes the neighborhood constraint for offline RL and demonstrates its key properties through theoretical analyses.
Some offline RL methods implicitly involve the concept of neighborhood by incorporating regularization terms that guide the trained policy toward dataset actions~\citep{fujimoto2021minimalist,tarasov2024revisiting,ran2023policy,mao2024doubly}. However, due to the multi-modal nature of dataset action distributions~\citep{chen2023offline} and the limited expressiveness of policy architectures~\citep{wang2023diffusion}, these policy iteration approaches cannot ensure that the trained policy's output actions for the Bellman target remain within the union of dataset action neighborhoods. In contrast, we adopt a value iteration framework and introduce an auxiliary policy to effectively enforce the proposed neighborhood constraint.
BCQ~\citep{fujimoto2019off} employs a perturbation model related to our auxiliary policy. Specifically, BCQ samples multiple candidate actions per state from a pre-trained generative behavior policy model, perturbs them using the perturbation model, and selects the one with the highest Q value. The perturbation model constructs neighborhoods around the generated actions and is intended to avoid ``sampling from the generative model for a prohibitive number of times~\citep{fujimoto2019off}''. Subsequent works such as EMaQ~\citep{ghasemipour2021emaq} and SfBC~\citep{chen2023offline} show that this perturbation model can be omitted without sacrificing performance. By contrast, our auxiliary policy constructs neighborhoods directly around dataset actions and is designed to support Q learning under the proposed constraint, without requiring complex behavior policy modeling.
Based on the proposed neighborhood constraint, this work provides a general constrained Q learning framework for adaptive distribution-shift control, with the potential to extend to broader RL paradigms such as safe RL~\citep{achiam2017constrained,gu2022review,garcia2015comprehensive} and meta RL~\citep{wang2022model,wang2022learning,qu2025fast}.

\section{Proofs}
\label{app_sec:proofs}

\cref{lem:kl constrained eta} in the main text characterizes the performance difference between two policies subject to divergence constraints, which is related to the results in TRPO~\citep{schulman2015trust} and CPO~\citep{achiam2017constrained}.
To prove \cref{lem:kl constrained eta}, we begin with the performance difference lemma~\citep{Kakade2002approximately}, which decomposes the difference in policy performance, $\eta(\pi') - \eta(\pi)$, into an expectation of advantages.
\begin{lemma}
\label{app_lem:performance difference}
For any two policies $\pi'$ and $\pi$, the following equality holds:
\begin{equation}
    \label{appeqn:performancedifference}
    \eta(\pi') - \eta(\pi) = \dfrac{1}{1-\gamma} \sum_{s} d_{\pi'}(s) \sum_{a}\left[\pi'(a|s)A^{\pi}(s,a)\right]
\end{equation}
where $d^{\pi}(s)=(1-\gamma) \sum_{t=0}^{\infty} \gamma^{t}  \mathbb{E}_\pi \left[ \mathbb{I}\left[s_{t}=s\right]\right]$ is the state occupancy induced by the policy $\pi$.
\end{lemma}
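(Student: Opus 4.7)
The plan is to follow the classical telescoping argument of Kakade and Langford. I would start from the identity $\eta(\pi') - \eta(\pi) = \mathbb{E}_{s_0 \sim d_0}[V^{\pi'}(s_0) - V^{\pi}(s_0)]$, rewrite $V^{\pi'}(s_0)$ as the expected discounted return along a trajectory sampled under $\pi'$, and then insert and subtract $V^{\pi}$ at every time step.

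Concretely, the key algebraic move is to express $V^{\pi}(s_0)$ as a telescoping sum along a trajectory drawn from $\pi'$:
\begin{equation}
V^{\pi}(s_0) = \mathbb{E}_{\tau \sim \pi'}\!\left[\sum_{t=0}^{\infty} \gamma^{t} V^{\pi}(s_t) - \sum_{t=0}^{\infty} \gamma^{t+1} V^{\pi}(s_{t+1})\right],
\end{equation}
which is valid because the successive terms cancel, leaving $V^{\pi}(s_0)$, and because $V^{\pi}(s_0)$ is constant with respect to the trajectory distribution induced by $\pi'$. Subtracting this expression from $\mathbb{E}_{\tau \sim \pi'}[\sum_{t=0}^{\infty} \gamma^{t} R(s_t,a_t)]$ and grouping terms at the same time index gives
\begin{equation}
\eta(\pi') - \eta(\pi) = \mathbb{E}_{\tau \sim \pi'}\!\left[\sum_{t=0}^{\infty} \gamma^{t}\bigl(R(s_t,a_t) + \gamma V^{\pi}(s_{t+1}) - V^{\pi}(s_t)\bigr)\right].
\end{equation}
Taking conditional expectation over $s_{t+1}$ given $(s_t,a_t)$ turns the bracketed quantity into $Q^{\pi}(s_t,a_t) - V^{\pi}(s_t) = A^{\pi}(s_t,a_t)$.

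To finish, I would swap the infinite sum and expectation (justified by bounded rewards and $\gamma < 1$), and rewrite the marginal at time $t$ in terms of the discounted state occupancy: $\sum_{t=0}^{\infty} \gamma^{t} \Pr_{\pi'}(s_t = s) = (1-\gamma)^{-1} d^{\pi'}(s)$. Then decomposing the action expectation as $\sum_a \pi'(a|s) A^{\pi}(s,a)$ yields the claimed identity. The main subtlety, which I would handle carefully, is verifying that the telescoping of $V^{\pi}(s_0)$ under the $\pi'$-trajectory distribution is legitimate; this requires the boundedness of $V^{\pi}$ (guaranteed by $R \in [0,R_{\max}]$ and $\gamma<1$) so that Fubini/dominated convergence applies to the interchange of summation and expectation. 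Everything else is routine bookkeeping.
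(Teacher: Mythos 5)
Your proof is correct and is precisely the classical telescoping argument of Kakade and Langford that the paper itself defers to (the paper gives no proof of its own, simply citing Lemma 6.1 of that work). All of your steps --- the telescoping of $V^{\pi}(s_0)$ along $\pi'$-trajectories, the conditional expectation turning $R(s_t,a_t)+\gamma V^{\pi}(s_{t+1})-V^{\pi}(s_t)$ into $A^{\pi}(s_t,a_t)$, and the conversion of $\sum_t \gamma^t \Pr_{\pi'}(s_t=s)$ into $(1-\gamma)^{-1} d^{\pi'}(s)$ --- check out, with the interchange of sums and expectations justified by bounded rewards and $\gamma<1$ as you note.
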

\begin{proof}
Please refer to the proof of Lemma 6.1 in \citet{Kakade2002approximately}.
\end{proof}

\begin{lemma}[Performance bound under density constraint, \cref{lem:kl constrained eta}]
\label{app_lem:kl constrained eta}
If any of the conditions $\mathrm{D_{KL}}(\pi\|\pi_\beta) \leq 2\epsilon$, $\mathrm{D_{KL}}(\pi_\beta\|\pi) \leq 2\epsilon$, or $\mathrm{D_{TV}}(\pi,\pi_\beta) \leq \sqrt{\epsilon}$ holds, then the policy performance $\eta$ is bounded as follows:
\begin{equation}
    \eta (\pi) \leq \eta(\pi_\beta)+\frac{2\Rmax}{(1-\gamma)^2} \sqrt{\epsilon}.
\end{equation}
\end{lemma}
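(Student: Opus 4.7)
The plan is to deploy the performance difference lemma (\cref{app_lem:performance difference}) and convert it to a total-variation bound, then invoke Pinsker's inequality to handle the two KL cases uniformly.

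First, apply \cref{app_lem:performance difference} with $\pi' = \pi$ and the base policy $\pi_\beta$, so that
\begin{equation*}
\eta(\pi) - \eta(\pi_\beta) = \frac{1}{1-\gamma} \sum_s d^{\pi}(s) \sum_a \pi(a|s) A^{\pi_\beta}(s,a).
\end{equation*}
Since $\sum_a \pi_\beta(a|s) A^{\pi_\beta}(s,a) = 0$ by the definition of the advantage, I can subtract this zero-term inside the inner sum to rewrite the expression as $\sum_a (\pi(a|s) - \pi_\beta(a|s)) A^{\pi_\beta}(s,a)$. This is the crucial step that introduces the policy gap which the divergence constraints will control.

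Next, bound the advantage by $|A^{\pi_\beta}(s,a)| \leq \Rmax/(1-\gamma)$ using the assumption that rewards lie in $[0, \Rmax]$, and pull this constant out of the inner sum. The remaining $\ell_1$ difference between $\pi$ and $\pi_\beta$ is exactly $2\,\mathrm{D_{TV}}(\pi(\cdot|s), \pi_\beta(\cdot|s))$, so
\begin{equation*}
\eta(\pi) - \eta(\pi_\beta) \leq \frac{2\Rmax}{(1-\gamma)^2} \, \mathbb E_{s \sim d^{\pi}}\bigl[\mathrm{D_{TV}}(\pi(\cdot|s), \pi_\beta(\cdot|s))\bigr].
\end{equation*}

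Finally, I handle the three stated hypotheses. The TV case is immediate: the assumption $\mathrm{D_{TV}}(\pi,\pi_\beta) \leq \sqrt\epsilon$ (interpreted as an expectation under $d^\pi$) plugs in directly. For the two KL cases, apply Pinsker's inequality $\mathrm{D_{TV}}(p,q) \leq \sqrt{\tfrac12 \mathrm{D_{KL}}(p\|q)}$ statewise, using the symmetry of total variation $\mathrm{D_{TV}}(\pi,\pi_\beta) = \mathrm{D_{TV}}(\pi_\beta,\pi)$ to cover both directions of the KL, then pull the square root outside the expectation via Jensen's inequality to obtain $\mathbb E_{s \sim d^\pi}[\mathrm{D_{TV}}] \leq \sqrt{\tfrac12 \cdot 2\epsilon} = \sqrt\epsilon$. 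Combining this with the previous display yields the claimed bound.

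The main delicacy, rather than a substantive obstacle, is keeping the interpretation of the divergence hypotheses consistent: the argument requires that the bound $2\epsilon$ holds for the expected statewise KL under $d^\pi$ (or pointwise, which is stronger), so that Jensen can be applied cleanly. The factors of $2$ in the KL hypothesis and the $\sqrt\epsilon$ scaling in the TV hypothesis are chosen precisely so that all three cases produce the same final constant $2\Rmax/(1-\gamma)^2$ after Pinsker and Jensen; verifying this alignment across the three hypotheses is the only bookkeeping that needs care.
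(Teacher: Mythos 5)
Your proof is correct and takes essentially the same route as the paper's: the performance difference lemma, insertion of a zero advantage term to expose the policy gap $\pi-\pi_\beta$, a uniform bound of $\Rmax/(1-\gamma)$ on the value/advantage, conversion of the $\ell_1$ gap to total variation, and Pinsker (plus Jensen) for the KL cases. The only cosmetic difference is that you apply the lemma with the occupancy of $\pi$ and the advantage $A^{\pi_\beta}$, whereas the paper uses the occupancy of $\pi_\beta$ and the advantage $A^{\pi}$ (bounding $|Q^{\pi}|$ rather than $|A^{\pi_\beta}|$); both yield the identical constant $2\Rmax/(1-\gamma)^2$.
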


\begin{proof}

By \cref{app_lem:performance difference}, we have 
\begin{align}
    |\eta(\pi_\beta) - \eta(\pi)| &= \dfrac{1}{1-\gamma} \left|\sum_{s} d_{\pi_\beta}(s) \sum_{a}[\pi_\beta(a|s)A^{\pi}(s,a)] \right| \\
    &= \dfrac{1}{1-\gamma} \left|\sum_{s} d_{\pi_\beta}(s) \sum_{a} \pi_\beta(a|s) \left(Q^\pi(s,a) - V^\pi(s)\right) \right| \\ 
    &= \dfrac{1}{1-\gamma} \left|\sum_{s} d_{\pi_\beta}(s) \left( \sum_{a} \pi_\beta(a|s) Q^\pi(s,a) - V^\pi(s) \right) \right| \\
    &= \dfrac{1}{1-\gamma} \left|\sum_{s} d_{\pi_\beta}(s) \left( \sum_{a} \pi_\beta(a|s) Q^\pi(s,a) - \sum_{a} \pi(a|s) Q^\pi(s,a) \right) \right| \\
    &= \dfrac{1}{1-\gamma} \left|\sum_{s} d_{\pi_\beta}(s) \sum_{a}(\pi_\beta(a|s)-\pi(a|s))Q^{\pi}(s,a) \right| \\
    &\leq \dfrac{1}{1-\gamma} \sum_{s} d_{\pi_\beta}(s) \sum_{a}\left|\pi_\beta(a|s)-\pi(a|s) \right||Q^{\pi}(s,a)|  \\
    &\leq \dfrac{\Rmax}{(1-\gamma)^2} \sum_{s} d_{\pi_\beta}(s) \sum_{a}\left|\pi_\beta(a|s)-\pi(a|s) \right|\\
    &= \dfrac{2\Rmax}{(1-\gamma)^2} \sum_{s} d_{\pi_\beta}(s) \mathrm{D_{TV}}(\pi , \pi_\beta)[s] \label{app_eq:TV condition} \\
    &\leq \frac{\sqrt{2}\Rmax}{(1-\gamma)^2} \sum_{s} d_{\pi_\beta}(s) \min\left\{\sqrt{\mathrm{D_{KL}}(\pi \| \pi_\beta)[s]}, \sqrt{\mathrm{D_{KL}}(\pi_\beta \| \pi)[s]}\right\} \label{app_eq:KL condition}
\end{align}
where the last inequality holds by Pinsker's inequality.

By substituting the TV condition, $\mathrm{D_{TV}}(\pi,\pi_\beta) \leq \sqrt{\epsilon}$, into Eq.~\eqref{app_eq:TV condition}, or the KL condition, $\mathrm{D_{KL}}(\pi\|\pi_\beta) \leq 2\epsilon$ or $\mathrm{D_{KL}}(\pi_\beta\|\pi) \leq 2\epsilon$, into Eq.~\eqref{app_eq:KL condition}, we derive the final performance bound:
\begin{equation}
    |\eta(\pi) - \eta(\pi_\beta)| \leq \frac{2\Rmax}{(1-\gamma)^2} \sqrt{\epsilon}.
\end{equation}
\end{proof}

\cref{lem:ntk} in the main text analyzes the impact of neighborhood constraint on deep Q functions under the neural tangent kernel~(NTK) regime~\citep{jacot2018neural}. The result is a direct corollary of Theorem 1 in \cite{li2023when}, specialized to the case of action extrapolation. The NTK assumption is presented in \cref{app_ass:ntk}.

\begin{assumption}[NTK assumption]
\label{app_ass:ntk}
The Q function approximators are two-layer fully-connected ReLU neural networks with infinite width and are trained with an infinitesimally small learning rate.
\end{assumption}

While not applicable to some practically advanced architectures~\citep{zou2025structural}, NTK remains one of the most influential theoretical frameworks for analyzing the generalization of deep neural networks. \cref{app_ass:ntk} is common in previous analyses on the generalization of deep neural networks~\citep{jacot2018neural, arora2019fine} and the convergence of deep RL~\citep{cai2019neural, liu2019neural, fan2020theoretical}.

\begin{lemma}[Extrapolation behavior, \cref{lem:ntk}]
\label{app_lem:ntk}
Under the neural tangent kernel~(NTK) regime~\citep{jacot2018neural}, for any in-sample state-action pair $(s,a)\in\Dcal$ and in-neighborhood state-action pair $(s,\tilde a)$ such that $\|\tilde a-a\| \leq \epsilon$, the value difference of the deep Q function can be bounded as:
\begin{equation}
    \|Q_{\theta}(s,\tilde a)-Q_{\theta}(s,a)\| \le  C (\sqrt{\min (\|s \oplus a\|, \|s \oplus \tilde a\|)}\sqrt{\epsilon}+2\epsilon),
\end{equation}
where $\oplus$ denotes the vector concatenation operation, and $C$ is a finite constant.
\end{lemma}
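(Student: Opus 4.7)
The plan is to invoke Theorem 1 of \citet{li2023when} as a black box and specialize it to a concatenated state-action input. That theorem, stated under the NTK regime (\cref{app_ass:ntk}), establishes a general extrapolation bound for an infinite-width two-layer ReLU network $f_\theta$ trained to convergence: for any two inputs $x_1, x_2$ in its input space, there is a finite constant $C$ (depending on the NTK and the training set) such that
\begin{equation}
\|f_\theta(x_1) - f_\theta(x_2)\| \le C\bigl(\sqrt{\min(\|x_1\|, \|x_2\|)}\sqrt{\|x_1 - x_2\|} + 2\|x_1 - x_2\|\bigr).
\end{equation}
In our setting the deep Q function $Q_\theta$ plays the role of $f_\theta$, with concatenated state-action vectors as its inputs.

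First I would identify $x_1 = s \oplus a$ and $x_2 = s \oplus \tilde a$, since $Q_\theta$ takes $s \oplus a$ as its input under the NTK parameterization. Because the state component is shared between the two inputs, the concatenation collapses the relevant distance to the action distance:
\begin{equation}
\|x_1 - x_2\| = \|(s \oplus a) - (s \oplus \tilde a)\| = \|a - \tilde a\| \le \epsilon,
\end{equation}
where the last inequality is the in-neighborhood hypothesis. Plugging this into the general bound above and using monotonicity of $\sqrt{\cdot}$ on the first factor yields
\begin{equation}
\|Q_\theta(s, \tilde a) - Q_\theta(s, a)\| \le C\bigl(\sqrt{\min(\|s \oplus a\|, \|s \oplus \tilde a\|)}\sqrt{\epsilon} + 2\epsilon\bigr),
\end{equation}
which is exactly the claimed inequality.

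All of the substantive analytic work (NTK convergence, control of generalization via the kernel and training data, the appearance of the $\sqrt{\|x\|}\sqrt{\|x_1-x_2\|}$ term) is already contained in Theorem 1 of \citet{li2023when}; the only new step is the simple algebraic observation that sharing the state coordinate reduces the concatenated input distance to $\|\tilde a - a\|$, so that the neighborhood radius $\epsilon$ directly controls the deep-Q extrapolation error. The main ``obstacle'' is therefore purely bookkeeping: verifying that the hypotheses of the cited theorem (width, activation, training regime) are precisely those collected in \cref{app_ass:ntk}, so that no additional conditions need to be assumed on $s$ or $a$ beyond $(s,a) \in \Dcal$ and $\|\tilde a - a\| \le \epsilon$.
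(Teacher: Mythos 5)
Your proposal is correct and matches the paper's own proof, which likewise treats the lemma as a direct corollary of Theorem 1 (or Lemma 4) of \citep{li2023when} specialized to concatenated state-action inputs with a shared state coordinate. The only difference is that you make the bookkeeping step $\|(s \oplus a) - (s \oplus \tilde a)\| = \|a - \tilde a\| \le \epsilon$ explicit, which the paper leaves implicit.
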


\begin{proof}
The lemma follows directly from Theorem 1 or Lemma 4 in \citep{li2023when}. Please refer to \citep{li2023when} for detailed proofs.
\end{proof}

\cref{lem:TV} in the main text assumes that the transition dynamics is $K_P$-Lipschitz continuous, which is a common assumption in the theoretical RL studies~\cite{dufour2013finite,dufour2015approximation,xiong2022deterministic,ran2023policy}. A small $K_P$ is also empirically supported by observations in physical systems, where small action changes often lead to limited next-state variation~\citep{todorov2012mujoco}.

\begin{proposition}[Distribution shift, \cref{lem:TV}]
\label{app_lem:TV}
Let $\pi_1$ be a deterministic policy that satisfies the neighborhood constraint with threshold $\epsilon$. Assume that the transition dynamics $P$ is $K_P$-Lipschitz continuous:
$\forall s \in \Scal$, $\forall a_1,a_2 \in \Acal$, $\|P(s'|s,a_1)-P(s'|s,a_2)\| \leq K_P \|a_1-a_2\|$.
Then, there exists a policy $\pi_2$ satisfying the sample constraint such that:
\begin{equation}
\mathrm{D_{TV}}\left(d^{\pi_1}(\cdot) , d^{\pi_2}(\cdot)\right) \leq \frac{\gamma K_P \epsilon}{2(1 - \gamma)},
\end{equation}
where $d^{\pi}(s)=(1-\gamma) \sum_{t=0}^{\infty} \gamma^{t}  \mathbb{E}_\pi \left[ \mathbb{I}\left[s_{t}=s\right]\right]$ is the state occupancy induced by policy $\pi$.
\end{proposition}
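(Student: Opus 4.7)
The plan is to construct $\pi_2$ explicitly, then push the per-state closeness of the two policies through the dynamics to obtain a bound on the induced state occupancies via a standard simulation-style argument.

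First, I would construct $\pi_2$ as follows. By hypothesis, $\pi_1$ satisfies the neighborhood constraint, so for every state $s$ (on which the constraint applies), there exists some dataset action $a_s \in \mathcal{D}(s)$ with $\|\pi_1(s) - a_s\| \leq \epsilon$. Define $\pi_2(s) := a_s$ (taking the nearest such dataset action, breaking ties arbitrarily). By construction $\pi_2$ selects dataset actions only, and therefore satisfies the sample constraint, while $\|\pi_1(s) - \pi_2(s)\| \leq \epsilon$ pointwise.

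Next, I would translate this action-space closeness into closeness of the one-step transition kernels. Interpreting the Lipschitz assumption as a bound in $L_1$ on the distributions, the neighborhood bound yields
\begin{equation}
\mathrm{D_{TV}}\bigl(P(\cdot\,|\,s,\pi_1(s)),\,P(\cdot\,|\,s,\pi_2(s))\bigr) \leq \tfrac{1}{2} K_P \|\pi_1(s) - \pi_2(s)\| \leq \tfrac{1}{2} K_P \epsilon,
\end{equation}
for all $s$. This gives a uniform per-state bound on the discrepancy between the transition kernels $T^{\pi_1}(\cdot|s)$ and $T^{\pi_2}(\cdot|s)$ induced by the two policies (assuming the same initial state distribution $d_0$).

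Finally, I would invoke the standard occupancy-divergence lemma that lifts a per-state TV bound on transition kernels to a TV bound on the discounted state occupancies. Concretely, writing $d^{\pi_i}$ as the fixed point of the Bellman-style recursion $d^{\pi_i} = (1-\gamma) d_0 + \gamma (T^{\pi_i})^\top d^{\pi_i}$ and telescoping, one obtains
\begin{equation}
\mathrm{D_{TV}}\bigl(d^{\pi_1},\,d^{\pi_2}\bigr) \leq \frac{\gamma}{1-\gamma} \sup_{s} \mathrm{D_{TV}}\bigl(T^{\pi_1}(\cdot|s),\,T^{\pi_2}(\cdot|s)\bigr) \leq \frac{\gamma K_P \epsilon}{2(1-\gamma)},
\end{equation}
which is exactly the claimed bound.

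The main obstacle is mostly a bookkeeping issue rather than a deep technical one: carefully interpreting the Lipschitz assumption as a bound in the appropriate norm on distributions (so that the factor of $1/2$ converting $L_1$ to TV appears correctly), and ensuring the construction of $\pi_2$ is well-defined on every state visited by $\pi_1$ (which implicitly assumes that the neighborhood constraint is enforced on all reachable states, not only on states appearing in $\mathcal{D}$). The rest is a routine telescoping/contraction argument that is by now standard in the offline RL and simulation-lemma literature.
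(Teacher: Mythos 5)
Your proposal is correct and follows essentially the same route as the paper: construct $\pi_2$ by snapping $\pi_1(s)$ to a dataset action within distance $\epsilon$, convert the $K_P$-Lipschitz dynamics bound into a per-state $L_1$/TV bound on the induced transition kernels, and lift it to the occupancy measures via the Bellman-flow recursion and a telescoping contraction (which the paper writes out explicitly rather than citing the simulation lemma). The one loose end you flag---states outside $\mathcal{D}$---is handled in the paper simply by setting $\pi_2(s) := \pi_1(s)$ there, which keeps $\max_s \|\pi_1(s)-\pi_2(s)\| \leq \epsilon$ while remaining consistent with the sample constraint as defined only on dataset states.
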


\begin{proof}
We analyze the difference in state occupancy measures induced by two deterministic policies $\pi_1$ and $\pi_2$, leveraging the Lipschitz continuity of the transition dynamics.

The state occupancy measure $d^\pi$ satisfies the Bellman-flow constraints.:
\begin{equation}
d^\pi(s) = (1-\gamma)d_0(s) + \gamma \sum_{s'} P(s \mid s', \pi(s')) d^\pi(s'),
\end{equation}
where $d_0(s)$ is the initial state distribution. 

For policies $\pi_1$ and $\pi_2$, their corresponding occupancy measures $d^{\pi_1}$ and $d^{\pi_2}$ satisfy:
\begin{equation}
d^{\pi_1}(s) - d^{\pi_2}(s) = \gamma \sum_{s'} \left[ P(s \mid s', \pi_1(s')) d^{\pi_1}(s') - P(s \mid s', \pi_2(s')) d^{\pi_2}(s') \right].
\end{equation}

We decompose the difference into two terms:
\begin{align*}
d^{\pi_1}(s) - d^{\pi_2}(s) = & \gamma \sum_{s'} P(s \mid s', \pi_1(s')) \left(d^{\pi_1}(s') - d^{\pi_2}(s')\right) \\
+ & \gamma \sum_{s'} \left(P(s \mid s', \pi_1(s')) - P(s \mid s', \pi_2(s'))\right) d^{\pi_2}(s').
\end{align*}

Let $\Delta(s) = |d^{\pi_1}(s) - d^{\pi_2}(s)|$ and apply the triangle inequality:
\begin{align}
\Delta(s) &\leq \gamma \left| \sum_{s'} P(s \mid s', \pi_1(s')) \Delta(s') \right| + \gamma \left| \sum_{s'} \left(P(s \mid s', \pi_1(s')) - P(s \mid s', \pi_2(s'))\right) d^{\pi_2}(s') \right| \nonumber \\
&\leq \gamma \sum_{s'} P(s \mid s', \pi_1(s')) \Delta(s') + \gamma \sum_{s'} \left| P(s \mid s', \pi_1(s')) - P(s \mid s', \pi_2(s')) \right| d^{\pi_2}(s').
\end{align}

Sum the above inequality over states $s$:
\begin{align}
\|\Delta&\|_1  := \sum_s \Delta(s) \nonumber \\
& \leq \gamma \sum_s \sum_{s'} P(s \mid s', \pi_1(s')) \Delta(s') + \gamma \sum_s \sum_{s'} \left| P(s \mid s', \pi_1(s')) - P(s \mid s', \pi_2(s')) \right| d^{\pi_2}(s') \nonumber \\
& = \gamma \sum_{s'} \Delta(s') + \gamma \sum_{s'} d^{\pi_2}(s') \sum_s \left| P(s \mid s', \pi_1(s')) - P(s \mid s', \pi_2(s')) \right| \nonumber \\
& = \gamma \|\Delta\|_1 + \gamma \sum_{s'} d^{\pi_2}(s') \sum_s \left| P(s \mid s', \pi_1(s')) - P(s \mid s', \pi_2(s')) \right|. \label{app_eq:1}
\end{align}

Using the $K_P$-Lipschitz property for transition dynamics $P$:
\begin{equation}
\sum_s \left| P(s \mid s', \pi_1(s')) - P(s \mid s', \pi_2(s')) \right| \leq K_P \|\pi_1(s') - \pi_2(s')\|, 
\label{app_eq:2}
\end{equation}
Substitute Eq.~\eqref{app_eq:2} into the TV bound in Eq.~\eqref{app_eq:1}:
\begin{align}
\|\Delta\|_1 &\leq \gamma \|\Delta\|_1 + \gamma K_P \sum_{s'} d^{\pi_2}(s') \|\pi_1(s') - \pi_2(s')\| \\
&\leq \gamma \|\Delta\|_1 + \gamma K_P \max_s \|\pi_1(s) - \pi_2(s)\|.
\end{align}

Rearranging terms to isolate $\|\Delta\|_1$, we obtain the following expression for the total variation distance between the state occupancy measures:
\begin{equation}
\mathrm{D_{TV}}(d^{\pi_1} , d^{\pi_2}) := \frac{1}{2} \|\Delta\|_1 \leq \frac{\gamma K_P}{2(1 - \gamma)} \max_s \|\pi_1(s) - \pi_2(s)\|.
\end{equation}

Finally, based on the definitions of the neighborhood and sample constraints, we conclude that for any deterministic policy $\pi_1$ that satisfies the neighborhood constraint (\cref{def:NC}), there exists a deterministic policy $\pi_2$ satisfying the sample constraint (\cref{def:SaC}) such that 
for any $s\in \Dcal$, $\|\pi_1(s) - \pi_2(s)\| \leq \epsilon$. For $s \notin \Dcal$, We define $\pi_2(s):=\pi_1(s)$. Thus, $\max_{s} \|\pi_1(s) - \pi_2(s)\| \leq \epsilon$.
Therefore, the TV distance between the state occupancies of such policies $\pi_1$ and $\pi_2$ satisfies:
\begin{equation}
\mathrm{D_{TV}}(d^{\pi_1} , d^{\pi_2}) \leq \frac{\gamma K_P \epsilon}{2(1 - \gamma)}.
\end{equation}
\end{proof}

\cref{thm:support} in the main text uses the standardness assumption, which is common in geometric measure theory~\citep{cuevas2009set,chazal2014convergence}.
\begin{assumption}[Standardness, \cref{ass:Standardness}]
\label{app_ass:Standardness}
Let $S \subseteq \mathbb{R}^d$ be the support of a probability distribution $\nu$, and $B(x, r)$ be the closed ball of radius $r$ centered at $x\in \mathbb{R}^d$. There exist constants $r_0 > 0$ and $C_0 > 0$ such that:
\begin{equation}
\forall x \in S, ~~\forall r \leq r_0 , ~~\nu(B(x, r)) \geq C_0 \cdot r^d.
\end{equation}
\end{assumption}
It ensures that the measure $\nu$ does not exhibit ``holes'' at small scales, preventing $\nu$ from being too sparse near any $x \in S $. It imposes a lower bound on the measure of small balls but does not explicitly require the existence of a density.

\begin{theorem}[Support approximation via neighborhoods, \cref{thm:support}]
\label{app_thm:support}
Let $S \subseteq \mathbb{R}^d$ be the compact support of a distribution $\nu$, and let $X_1, \dots, X_n$ be independent and identically distributed samples from $\nu$. Define $U_{n,\epsilon} = \bigcup_{i=1}^n B(X_i, \epsilon)$ as the union of closed balls of radius $\epsilon$ centered at the samples.
Let $\mathcal{N}(S, \epsilon/2)$ denote the covering number of $S$, i.e., the minimal number of $\epsilon/2$-balls required to cover $S$.
Under the standardness \cref{ass:Standardness} with constants $r_0,C_0 > 0$, for any $\delta \in (0,1)$ and $\epsilon \leq 2r_0$, if  
\begin{equation}
n \geq \frac{1}{C_0 (\epsilon/2)^d} \left( \log \mathcal{N}(S, \epsilon/2) + \log(1/\delta) \right),
\end{equation}
then with probability at least $1 - \delta$, the Hausdorff distance between $S$ and $U_{n,\epsilon}$ satisfies  
\begin{equation}
d_H(S, U_{n,\epsilon}) := \max \left( \sup_{x \in S} \inf_{u \in U_{n,\epsilon}} d(x, u), \sup_{u \in U_{n,\epsilon}} \inf_{x \in S} d(x, u) \right) \leq \epsilon.
\end{equation}
\end{theorem}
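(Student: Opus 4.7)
The plan is to bound the two directions of the Hausdorff distance separately. The direction $\sup_{u \in U_{n,\epsilon}} \inf_{x \in S} d(x,u) \le \epsilon$ is immediate from the definition: every $u \in U_{n,\epsilon}$ lies in some $B(X_i,\epsilon)$ with $X_i \in S$ (since samples from $\nu$ lie in its support), so $\inf_{x \in S} d(x,u) \le d(X_i,u) \le \epsilon$. All the real work is in showing $\sup_{x \in S}\inf_{u \in U_{n,\epsilon}} d(x,u) \le \epsilon$, which is equivalent to the assertion that every $x \in S$ lies in some $B(X_i,\epsilon)$.

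For this one-sided bound I would use a covering/net argument. First I would fix a minimal $\epsilon/2$-cover $\{c_1,\dots,c_N\}$ of $S$ with $N = \mathcal{N}(S,\epsilon/2)$, where each $c_j \in S$ (possible up to doubling the constant, or one can use an internal cover directly). Because $\epsilon \le 2r_0$ implies $\epsilon/2 \le r_0$, the standardness assumption at each $c_j \in S$ gives the ball-mass lower bound
\begin{equation}
\nu\bigl(B(c_j,\epsilon/2)\bigr) \;\ge\; C_0 (\epsilon/2)^d.
\end{equation}
Then, for fixed $j$, the probability that none of the $n$ i.i.d.\ samples falls in $B(c_j,\epsilon/2)$ is at most
\begin{equation}
\bigl(1 - C_0(\epsilon/2)^d\bigr)^n \;\le\; \exp\!\bigl(-n C_0 (\epsilon/2)^d\bigr),
\end{equation}
using $1 - t \le e^{-t}$. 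A union bound over the $N$ covering centers gives a failure probability of at most $N\exp(-n C_0(\epsilon/2)^d)$, and setting this $\le \delta$ yields precisely the sample-complexity condition
$n \ge \tfrac{1}{C_0(\epsilon/2)^d}\bigl(\log N + \log(1/\delta)\bigr)$ stated in the theorem.

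On the complementary event (which has probability $\ge 1-\delta$), every covering ball $B(c_j,\epsilon/2)$ contains at least one sample $X_{i_j}$. Then for an arbitrary $x \in S$, pick $c_j$ with $d(x,c_j)\le \epsilon/2$; by the triangle inequality,
\begin{equation}
d(x,X_{i_j}) \;\le\; d(x,c_j) + d(c_j,X_{i_j}) \;\le\; \epsilon/2 + \epsilon/2 \;=\; \epsilon,
\end{equation}
so $x \in B(X_{i_j},\epsilon) \subseteq U_{n,\epsilon}$. Combining both directions yields $d_H(S,U_{n,\epsilon}) \le \epsilon$. I expect the main subtlety (and only real obstacle) to be the justification that the covering centers can be taken to lie in $S$ so that standardness applies at each of them; if one defines $\mathcal{N}(S,\epsilon/2)$ via centers drawn from $S$ this is automatic, and otherwise one either works with a slightly inflated radius and absorbs a constant factor, or picks a sample point in $B(c_j,\epsilon/2)\cap S$ after applying standardness to a nearby point of $S$. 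The remaining ingredients (Chernoff-type ball bound, union bound, triangle inequality) are routine.
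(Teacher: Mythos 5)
Your proposal is correct and follows essentially the same route as the paper's proof: an internal $\epsilon/2$-cover of $S$ with centers in $S$, the standardness lower bound on ball mass, the $1-t \le e^{-t}$ estimate with a union bound to obtain the stated sample-complexity condition, and the triangle inequality to conclude $S \subseteq U_{n,\epsilon}$, with the reverse Hausdorff direction being immediate. The subtlety you flag about covering centers lying in $S$ is handled in the paper exactly as you suggest, by taking the cover with centers drawn from $S$.
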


\begin{proof}
By the compactness of $S$, for any $\epsilon>0$, there exists a finite set of points $\{y_1, \dots, y_{N}\} \subset S$ such that: 
\begin{equation}
\label{app_eq:3}
S \subseteq \bigcup_{j=1}^N B(y_j, \epsilon/2),
\end{equation}
where $B(y_j, \epsilon/2)$ denotes the closed ball of radius $\epsilon/2$ centered at $y_j$, and $N = \mathcal{N}(S, \epsilon/2)$ denotes the covering number of $S$ for a given radius $\epsilon/2$, i.e., the minimal number of $\epsilon/2$-balls required to cover $S$. The existence of such a finite cover follows directly from the compactness of $S$.

For each $y_j$, the probability that none of the samples $X_1, \dots, X_n$ lies in $B(y_j, \epsilon/2)$ is:  
\begin{equation}
\mathbb{P}\left( B(y_j, \epsilon/2) \cap \{X_1, \dots, X_n\} = \emptyset \right) = \left( 1 - \nu(B(y_j, \epsilon/2)) \right)^n.
\end{equation}

Under the standardness assumption, there exist constants $r_0 > 0$ and $C_0 > 0$ such that 
\begin{equation}
\forall x \in S, ~~\forall  r \leq r_0 , ~~\nu(B(x, r)) \geq C_0 \cdot r^d.
\end{equation}

For $\epsilon \leq 2r_0$, we have $\epsilon/2 \leq r_0$, and thus  
\begin{equation}
\nu(B(y_j, \epsilon/2)) \geq C_0 \cdot (\epsilon/2)^d.
\end{equation}  
Using the inequality $1 - x \leq e^{-x}$, it follows that  
\begin{equation}
\left( 1 - \nu(B(y_j, \epsilon/2)) \right)^n \leq e^{-n \nu(B(y_j, \epsilon/2))} \leq e^{-n C_0 (\epsilon/2)^d}.
\end{equation}  
Applying the union bound over all $N$ covering balls:  
\begin{equation}
\mathbb{P}\left( \exists j \text{ s.t. } B(y_j, \epsilon/2) \cap \{X_1, \dots, X_n\} = \emptyset \right) \leq N\left( 1 - \nu(B(y_j, \epsilon/2)) \right)^n \leq N e^{-n C_0 (\epsilon/2)^d}.
\end{equation}  
To ensure this probability is at most $\delta$, we require  
\begin{equation}
N e^{-n C_0 (\epsilon/2)^d} := \mathcal{N}(S, \epsilon/2) \cdot e^{-n C_0 (\epsilon/2)^d} \leq \delta.
\end{equation}  
Solving for $n$, we obtain  
\begin{equation}
\label{app_eq:n condition}
n \geq \frac{\log \mathcal{N}(S, \epsilon/2) + \log(1/\delta)}{C_0 (\epsilon/2)^d},
\end{equation}  
which guarantees that, with probability at least $1 - \delta$, every $B(y_j, \epsilon/2)$ contains at least one sample $X_i$. 

The Hausdorff distance is a metric used to quantify the discrepancy between two sets and is widely applied in shape comparison and analysis across various domains~\citep{huttenlocher1993comparing, taha2015efficient}. It captures the maximum mismatch between the sets by measuring how far one must travel from a point in one set to reach the nearest point in the other.
Mathematically, for two non-empty sets $A$ and $B$ in a metric space (e.g., Euclidean space) with distance function $d$, the Hausdorff distance $d_H(A, B)$ is defined as:
\begin{equation}
d_H(A, B) := \max \left( \sup_{a \in A} \inf_{b \in B} d(a, b), \sup_{b \in B} \inf_{a \in A} d(a, b) \right)
\end{equation}

That is, for each point in set $A$, we find the closest point in set $B$, and take the maximum of these minimum distances. Similarly, for each point in set $B$, we find the closest point in set $A$, and take the maximum of these minimum distances. The Hausdorff distance is the larger of these two values. 

This distance is sensitive to outliers, as even a single distant point can significantly increase the value of the distance. This property renders it particularly appropriate for evaluating approximation effects in our setting, where we aim to optimize the Q function over an approximated constraint set. Outliers in such a set can exert a substantial influence on the Q function's value.

The Hausdorff distance between $S$ and $U_{n,\epsilon} := \bigcup_{i=1}^n B(X_i, \epsilon)$ is:
\begin{equation}
d_H(S, U_{n,\epsilon}) := \max \left( \sup_{x \in S} \inf_{u \in U_{n,\epsilon}} d(x, u), \sup_{u \in U_{n,\epsilon}} \inf_{x \in S} d(x, u) \right)
\end{equation}
where $d$ is the Euclidean distance.

We now analyze each term separately.

(1) Bounding the term $\sup_{x \in S} \inf_{u \in U_{n,\epsilon}} d(x, u)$:

By Eq.~\eqref{app_eq:3}, for any $x \in S$, there exists $y_j$ such that $x \in B(y_j, \epsilon/2)$. By the coverage guarantee under Eq.~\eqref{app_eq:n condition} (with probability at least $1 - \delta$, every $B(y_j, \epsilon/2)$ contains at least one sample $X_i$), it follows that $B(y_j, \epsilon/2)$ contains some $X_i$. Thus,  
\begin{equation}
d(x, X_i) \leq d(x, y_j) + d(y_j, X_i) \leq \epsilon/2 + \epsilon/2 = \epsilon.
\end{equation}  
This implies $x \in B(X_i, \epsilon) \subseteq U_{n,\epsilon}$. Since $x \in S$ is arbitrary, it holds that $S \subseteq U_{n,\epsilon}$, yielding $\sup_{x \in S} \inf_{u \in U_{n,\epsilon}} d(x, u) = 0$.

(2) Bounding the term $\sup_{u \in U_{n,\epsilon}} \inf_{x \in S} d(x, u)$:

By definition, any $u \in U_{n,\epsilon}$ belongs to $B(X_i, \epsilon)$ for some $X_i \in S$. Therefore, for any $u \in U_{n,\epsilon}$, the following holds:
\begin{equation}
\inf_{x \in S} d(x, u) \leq d(X_i, u) \leq \epsilon.
\end{equation}  
Thus, $\sup_{u \in U_{n,\epsilon}} \inf_{x \in S} d(x, u) \leq \epsilon$.

Combining both terms, we conclude
\begin{equation}
d_H(S, U_{n,\epsilon}) \leq \max\left( 0, \epsilon \right) \leq \epsilon.
\end{equation}

Therefore, when the sample size $n$ satisfies Eq.~\eqref{app_eq:n condition}, it follows that, with probability at least $1 - \delta$, the Hausdorff distance between $S$ and $U_{n,\epsilon}$ satisfies $d_H(S, U_{n,\epsilon}) \leq \epsilon$.
\end{proof}

\section{Experimental Details}
\label{app_sec:experimental_details}

\subsection{Experimental Details on D4RL Benchmarks}
\label{app_sec:experimental_details D4RL}

We adopt evaluation criteria consistent with those employed in prior studies. For the Gym locomotion tasks, performance is assessed by averaging returns over 10 evaluation trajectories and 5 random seeds. In the AntMaze suite, returns are averaged over 100 evaluation trajectories across the same number of seeds. Following the D4RL benchmark guidelines~\cite{fu2020d4rl}, we subtract 1 from the rewards in the AntMaze datasets. Additionally, in line with established practices~\cite{fujimoto2021minimalist,kostrikov2022offline,wu2022supported,xu2023offline}, we normalize the states in all Gym locomotion datasets. Our implementation builds upon TD3~\citep{fujimoto2018addressing}, optimizing a deterministic policy. Consequently, we employ mean squared error in the weighted behavior cloning objective in Eq.~\eqref{eq:pi}, instead of a log-likelihood term. This approach is commonly used in RL algorithms, where a maximum likelihood problem is transformed into a regression problem when dealing with Gaussians with a fixed variance~\citep{fujimoto2021minimalist}.
The architectures of the auxiliary policy and the final policy are identical, with the former's output range (i.e., max action) set to twice that of the latter to account for the most extreme cases.
Performance is reported using normalized scores provided by the D4RL benchmark~\cite{fu2020d4rl}, which quantify the quality of the learned policy relative to both random and expert baselines:
\begin{equation}
\text{D4RL score} = 100 \times \frac{\text{learned policy return}-\text{random policy return}}{\text{expert policy return}-\text{random policy return}}
\end{equation}

\algo introduces two key hyperparameters: the Lagrange multiplier $\lambda$, which controls the overall neighborhood radius, and the inverse temperature $\alpha$, which controls the adaptiveness of neighborhood radius to advantage values. For the inverse temperature, we fix $\alpha = 1$ across all tasks, which yields consistently strong performance. For the Lagrange multiplier, we tune $\lambda$ within a small set of values $\{0.1, 5.0\}$ for all tasks. 
Given that part of our algorithm incorporates techniques from IQL~\cite{kostrikov2022offline}, such as expectile regression and weighted behavior cloning, we adopt the hyperparameters suggested in their work: $\tau=0.7$ and $\beta=3$ for Gym locomotion tasks, and $\tau=0.9$ and $\beta=10$ for AntMaze tasks. 
To ensure that the constrained neighborhood is neither too large nor too small, we clip the exponentiated advantage weight $\exp(\alpha (Q_{\theta'}(s,a) - V_\psi(s)))$ in Eq.~\eqref{eq:mu} to $[0.01,30]$ in the Gym locomotion domains and $[0.01,10]$ in the Antmaze domains. Following prior practices, we also clip the exponentiated weight $\exp (\beta(Q_{\theta'}(s,a+\mu_\omega(s,a)) - V_\psi(s)))$ in Eq.~\eqref{eq:pi} to $[0,3]$ in the Gym locomotion domains and $[0,100]$ in the Antmaze domains to avoid instability.
A comprehensive list of hyperparameter settings for \algo is provided in \cref{app_tab:hyper}.

\paragraph{Insights into hyperparameter selection.}
For selecting the Lagrange multiplier $\lambda$, a general principle is as follows: for narrow-distribution datasets (typically expert or demonstration data), a larger $\lambda$ can be used to induce smaller neighborhoods, thereby helping to more effectively control extrapolation error. For more dispersed datasets (often containing noisy or mixed-quality data), the impact of extrapolation error is relatively weaker; hence, a smaller $\lambda$ (i.e., larger neighborhoods) can be adopted to promote broader optimization over the action space and mitigate the impacts of suboptimal actions. Additionally, using a larger inverse temperature $\alpha$ in such cases can further enhance this effect by downweighting low-advantage actions more aggressively.

\begin{table}[htbp]
\caption{Hyperparameters of \algo.}
\vspace{-1mm}
\label{app_tab:hyper}
\begin{center}
\begin{tabular}{cll}
\toprule
                              & Hyperparameter          & \multicolumn{1}{l}{Value}           \\ \midrule
\multirow{11}{*}{\algo}         & Optimizer               & \multicolumn{1}{l}{Adam~\citep{kingma2014adam}}            \\
                              & Critic learning rate    & \multicolumn{1}{l}{$3\times 10^{-4}$}            \\
                              & Actor learning rate     & \multicolumn{1}{l}{$3\times 10^{-4}$ with cosine schedule}  \\
                              & Discount factor                & 0.99 for Gym, 0.995 for Antmaze                              \\
                              & Target update rate      & 0.005                               \\
                              & Policy update frequency      & 2                               \\
                              & Number of Critics & 4                                   \\
                              & Batch size              & 256                                 \\
                              & Number of iterations    & $10^6$                             \\
                              & Lagrange multiplier $\lambda$  & \{0.1, 5.0\}  \\
                              & Inverse temperature $\alpha$  & 1  \\
                              \midrule
\multirow{2}{*}{IQL Specific} & Expectile $\tau$   & 0.7 for Gym, 0.9 for Antmaze \\
                              & Inverse temperature $\beta$ & 3.0 for Gym, 10.0 for Antmaze \\
                              \midrule
\multirow{2}{*}{Architecture} & Actor    & input-256-256-output                                 \\
                              & Critic & input-256-256-1                                      \\ \bottomrule
\end{tabular}
\end{center}
\end{table}

\subsection{Experimental Details of Noisy Data Experiments}
\label{app_sec:experimental_details noisy}

In the noisy data experiments, we construct noisy datasets by mixing the random and expert datasets in D4RL at various expert ratios, thereby simulating real-world scenarios such as suboptimal data collection in autonomous systems or imperfect demonstrations in robotics. The total size of the combined dataset is fixed at $1 \times 10^6$. In certain environments, the sizes of the random or expert datasets in D4RL are slightly smaller than $1 \times 10^6$, so we directly use the corresponding D4RL dataset when the expert ratio is 0 or 1.

We evaluate the performance of several representative algorithms with different constraint types, including CQL~(density constraint), IQL~(sample constraint), SPOT~(support constraint), and \algo~(neighborhood constraint). The hyperparameters of \algo follow those used in the default benchmark datasets, as detailed in \cref{app_tab:hyper}, and we fix $\lambda=5$ for all the mixed datasets.
In addition, we clip the exponentiated advantage weight in Eq.~\eqref{eq:mu} to $[0.1,100]$ for Hopper and Walker2d, and to $[0.1,30]$ for Halfcheetah.

\paragraph{Hyperparameters of baseline methods.} The hyperparameter configurations for the baseline methods are as follows. For CQL, we tune its regularization coefficient within the set $\{5, 10, 20, 30\}$, as it performs relatively well in this range, and report the best results obtained for each dataset. For IQL, we adopt the hyperparameters suggested in their work: expectile $\tau=0.7$ and inverse temperature $\beta=3$ on Gym locomotion tasks. For SPOT, we follow the implementation details provided in their paper, tuning its regularization coefficient within $\{0.05, 0.1, 0.2, 0.5, 1.0, 2.0\}$ on Gym locomotion tasks, and report the best results obtained for each dataset.

\subsection{Experimental Details of Limited Data Experiments}
\label{app_sec:experimental_details limited}

In the limited data experiments, we generate reduced datasets by randomly discarding some portion of transitions from the default AntMaze datasets. This setup simulates practical scenarios where data is sparse or partially missing, such as in healthcare applications.

We evaluate the performance of IQL~(sample constraint), SPOT~(support constraint), and \algo~(neighborhood constraint), excluding CQL~(density constraint) due to its consistently inferior performance on Antmaze tasks, as reported in \cref{tab:d4rl}. The hyperparameters for \algo follow those used in the default benchmark datasets, as detailed in \cref{app_tab:hyper}, except that the Lagrange multiplier $\lambda$ is set to $2$ to enable a sufficiently large neighborhood.

\paragraph{Hyperparameters of baseline methods.} The hyperparameter configurations for the baseline methods are as follows. For CQL, we tune its regularization coefficient within the set $\{5, 10, 20, 30\}$, as it performs relatively well in this range, and report the best results obtained for each dataset. For IQL, we adopt the hyperparameters suggested in their work: expectile $\tau=0.9$ and inverse temperature $\beta=10$ on Antmaze tasks. For SPOT, we follow the implementation details provided in their paper, tuning its regularization coefficient within $\{0.025, 0.05, 0.1, 0.25, 0.5, 1.0\}$ on Antmaze tasks, and report the best results obtained for each dataset.

\section{Additional Experimental Results}
\label{app_sec:experimental_results}
\subsection{Computational Cost}
\label{app_sec:computational cost}

We assess the runtime of offline RL algorithms on the halfcheetah-medium-replay-v2 dataset using a GeForce RTX 3090. The training time for \algo and the baseline methods are presented in \cref{app_fig:runtime}. \algo completes the task in approximately two hours, achieving competitive efficiency with other fast offline RL algorithms such as AWAC, IQL, and TD3BC, 

Several factors contribute to this efficiency:
(i) ANQ is model-free and ensemble-free, making it more efficient than model-based (e.g., MOPO~\citep{yu2020mopo}) and ensemble-based methods (e.g., EDAC~\citep{an2021uncertainty});
(ii) ANQ uses simple three-layer MLPs for policy and value networks (as in TD3~\citep{fujimoto2018addressing}), which are more lightweight than complex architectures used in methods like Decision Transformer~\citep{chen2021decision} or Diffusion Q-learning~\citep{wang2023diffusion};
(iii) ANQ optimizes deterministic policies with a policy update frequency of 2 (as in TD3~\citep{fujimoto2018addressing}), offering slightly better efficiency than algorithms with stochastic policies;
(iv) ANQ solves the bi-level objective via alternating updates, requiring only one extra lightweight optimization step for the auxiliary policy compared to IQL~\citep{kostrikov2022offline}.
As a result, ANQ achieves competitive training efficiency while maintaining strong performance.

\begin{figure}[ht]
    \centering
    \includegraphics[scale=0.6]{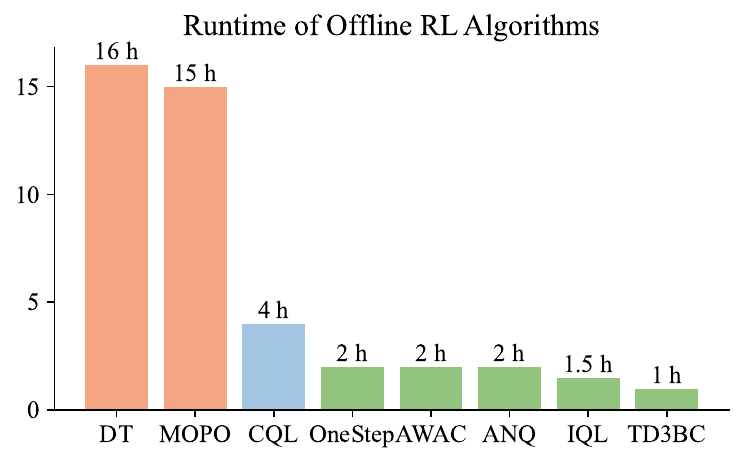}
    \caption{Runtime of algorithms on halfcheetah-medium-replay-v2 on a GeForce RTX 3090.}
    \label{app_fig:runtime}
\end{figure}

\subsection{Additional Benchmark Comparisons}
\label{app_sec:additional comparisons}
This section extends our evaluation by comparing \algo with additional recent SOTA algorithms of other constraint types on the D4RL benchmark~\citep{fu2020d4rl}. For support constraints, we compare against STR~\cite{mao2023supporteda}, CPI~\cite{ma2024iteratively}, CPED~\citep{zhang2024constrained}, and SVR~\citep{mao2023supportedb}. For sample constraints, we compare against IAC~\cite{zhang2023insample}, SQL~\cite{xu2023offline}, and EQL~\citep{xu2023offline}.
As shown in \cref{app_tab:additional comparison}, \algo achieves the best overall score in the Antmaze domain and demonstrates highly competitive performance in the Gym locomotion domain.
From a methodological design perspective, the proposed neighborhood constraint may be particularly beneficial in high-dimensional action spaces such as those in Antmaze. In such settings, support constraint methods often struggle to accurately model the behavior policy, while sample constraint methods risk being overly conservative due to limited coverage of near-optimal actions. ANQ avoids both issues by enabling controlled generalization without explicit behavior policy modeling.

\begin{table}[h]
  \caption{Averaged normalized scores on Gym locomotion and Antmaze tasks over five random seeds. 
  m = medium, m-r = medium-replay, m-e = medium-expert, e = expert, r = random; u = umaze, u-d = umaze-diverse, m-p = medium-play, m-d = medium-diverse, l-p= large-play, l-d = large-diverse.
  }
  \vspace{-1mm}
  \label{app_tab:additional comparison}
  \small
    \footnotesize
  \centering
  \setlength{\tabcolsep}{3.7pt}
  \begin{adjustbox}{max width=400pt}
  \begin{tabular}{@{}l rrrrrrrrrr r@{}}
    \toprule
    Dataset-v2 & IAC & SQL & EQL & STR & CPI & CPED & SVR & ANQ (Ours) \\ \midrule
    halfcheetah-m & 51.6$\pm$0.3 & 48.3$\pm$0.2 & 47.2$\pm$0.3 & 51.8$\pm$0.3 & \textbf{64.4$\pm$1.3} & 61.8$\pm$1.6 & 60.5$\pm$1.2 & 61.8$\pm$1.4 \\ 
    hopper-m & 74.6$\pm$11.5 & 75.5$\pm$3.4 & 70.6$\pm$2.6 & \textbf{101.3$\pm$0.4} & 98.5$\pm$3.0 & \textbf{100.1$\pm$2.8} & \textbf{103.5$\pm$0.4} & \textbf{100.9$\pm$0.6} \\ 
    walker2d-m & 85.2$\pm$0.4 & 84.2$\pm$4.6 & 83.2$\pm$4.4 & 85.9$\pm$1.1 & 85.8$\pm$0.8 & 90.2$\pm$1.7 & \textbf{92.4$\pm$1.2} & 82.9$\pm$1.5 \\ 
    halfcheetah-m-r & 47.2$\pm$0.3 & 44.8$\pm$0.7 & 44.5$\pm$0.5 & 47.5$\pm$0.2 & 54.6$\pm$1.3 & \textbf{55.8$\pm$2.9} & 52.5$\pm$3.0 & \textbf{55.5$\pm$1.4} \\ 
    hopper-m-r & \textbf{103.2$\pm$1.0} & \textbf{101.7$\pm$3.3} & 98.1$\pm$3.6 & 100.0$\pm$1.2 & \textbf{101.7$\pm$1.6} & 98.1$\pm$2.1 & \textbf{103.7$\pm$1.3} & \textbf{101.5$\pm$2.7} \\ 
    walker2d-m-r & \textbf{93.2$\pm$1.8} & 77.2$\pm$3.8 & 81.6$\pm$4.2 & 85.7$\pm$2.2 & 91.8$\pm$2.9 & 91.9$\pm$0.9 & \textbf{95.6$\pm$2.5} & 92.7$\pm$3.8 \\ 
    halfcheetah-m-e & 92.9$\pm$0.7 & \textbf{94.0$\pm$0.4} & \textbf{94.6$\pm$0.5} & \textbf{94.9$\pm$1.6} & \textbf{94.7$\pm$1.1} & 85.4$\pm$10.9 & \textbf{94.2$\pm$2.2} & \textbf{94.2$\pm$0.8} \\ 
    hopper-m-e & 109.3$\pm$4.0 & \textbf{111.8$\pm$2.2} & \textbf{111.5$\pm$2.1} & \textbf{111.9$\pm$0.6} & 106.4$\pm$4.3 & 95.3$\pm$13.5 & \textbf{111.2$\pm$0.9} & 107.0$\pm$4.9 \\ 
    walker2d-m-e & 110.1$\pm$0.1 & 110.0$\pm$0.8 & 110.2$\pm$0.8 & 110.2$\pm$0.1 & 110.9$\pm$0.4 & \textbf{113.0$\pm$1.4} & 109.3$\pm$0.2 & \textbf{111.7$\pm$0.2} \\ 
    halfcheetah-e & 94.5$\pm$0.5 & - & - & 95.2$\pm$0.3 & \textbf{96.5$\pm$0.2} & - & \textbf{96.1$\pm$0.7} & \textbf{95.9$\pm$0.4} \\ 
    hopper-e & 110.6$\pm$1.9 & - & - & \textbf{111.2$\pm$0.3} & \textbf{112.2$\pm$0.5} & - & \textbf{111.1$\pm$0.4} & \textbf{111.4$\pm$2.5} \\ 
    walker2d-e & \textbf{114.8$\pm$1.2} & - & - & 110.1$\pm$0.1 & 110.6$\pm$0.1 & - & 110.0$\pm$0.2 & 111.8$\pm$0.1 \\ 
    halfcheetah-r & 20.9$\pm$1.2 & - & - & 20.6$\pm$1.1 & \textbf{29.7$\pm$1.1} & - & 27.2$\pm$1.2 & 24.9$\pm$1.0 \\ 
    hopper-r & \textbf{31.3$\pm$0.3} & - & - & \textbf{31.3$\pm$0.3} & 29.5$\pm$3.7 & - & \textbf{31.0$\pm$0.3} & \textbf{31.1$\pm$0.2} \\ 
    walker2d-r & 3.0$\pm$1.3 & - & - & 4.7$\pm$3.8 & 5.9$\pm$1.7 & - & 2.2$\pm$1.5 & \textbf{11.2$\pm$9.5} \\ \midrule
    locomotion total & 1142.4 & - & - & 1162.2 & 1193.2 & - & \textbf{1200.5} & 1194.5 \\ \midrule
    antmaze-u & 77.6$\pm$3.8 & 92.2$\pm$1.4 & 93.2$\pm$2.2 & 93.6$\pm$4.0 & \textbf{98.8$\pm$1.1} & \textbf{96.8$\pm$2.6} & - & \textbf{96.0$\pm$1.6} \\ 
    antmaze-u-d & 71.2$\pm$8.6 & 74.0$\pm$2.3 & 70.4$\pm$2.7 & 77.4$\pm$7.2 & \textbf{88.6$\pm$5.7} & 55.6$\pm$2.2 & - & 80.2$\pm$1.8 \\ 
    antmaze-m-p & 72.0$\pm$7.6 & 80.2$\pm$3.7 & 77.5$\pm$4.3 & 82.6$\pm$5.4 & 82.4$\pm$5.8 & \textbf{85.1$\pm$3.4} & - & 76.2$\pm$3.3 \\ 
    antmaze-m-d & 74.2$\pm$4.1 & 75.1$\pm$4.2 & 74.0$\pm$3.7 & \textbf{87.0$\pm$4.2} & 80.4$\pm$8.9 & 72.1$\pm$2.9 & - & 77.2$\pm$6.1 \\ 
    antmaze-l-p & \textbf{57.0$\pm$7.4} & 50.2$\pm$4.8 & 45.6$\pm$4.2 & 42.8$\pm$8.7 & 20.6$\pm$16.3 & 34.9$\pm$5.3 & - & \textbf{56.2$\pm$4.9} \\ 
    antmaze-l-d & 47.2$\pm$9.4 & \textbf{52.3$\pm$5.2} & 49.5$\pm$4.7 & 46.8$\pm$7.6 & 45.2$\pm$6.9 & 32.3$\pm$7.4 & - & \textbf{55.8$\pm$4.0} \\ \midrule
    antmaze total & 399.2 & 424.0 & 410.2 & 430.2 & 416.0 & 376.8 & - & \textbf{441.6} \\ 
    \bottomrule
  \end{tabular}
  \end{adjustbox}
  \vspace{-1mm}
\end{table}

\subsection{Effect of the Auxiliary Policy}
\label{app_sec:auxiliary policy}
This section conducts an ablation study that replaces the auxiliary policy $\mu_\omega$ in \algo with a random Gaussian noise.

\paragraph{Discussion on exploiting vs. smoothing.} Defining the auxiliary policy $\mu$ as Gaussian noise is reminiscent of the target policy noise trick introduced in TD3~\citep{fujimoto2018addressing}. Specifically, in TD3, Gaussian noise is added to the policy actions in the Bellman target. Because the trained policy is prone to overfitting to local optima in the Q-function landscape, such noise injection can smooth the value estimates and reduce accumulated errors in Q-function training. By contrast, in our method, $\mu_\omega$ is applied to fixed in-dataset actions. Therefore, there is no need for Q-function estimate smoothing as in TD3. On the contrary, our method exploits, rather than smooths, the local Q landscape surrounding dataset actions, using it as a guide for the inner optimization. In our algorithm, $\mu_\omega$ is optimized to seek better actions within the neighborhood of dataset actions. Replacing $\mu_\omega$ with Gaussian noise thus turns this directed search into undirected random perturbations. 

\paragraph{Empirical results.} We compared different choices of the auxiliary policy $\mu$ on the Gym locomotion tasks, with results shown in \cref{app_tab:policy}. The tested variants include:
(i) ANQ-Gaussian noise $\mu$: $\mu$ is replaced with Gaussian noise, i.e., $\mu \sim \text{clip}(\mathcal{N}(0, 0.04), -0.5, 0.5)$ as used in TD3 and TD3BC;
(ii) IQL: correspond to ANQ with $\mu$ set to zero;
(iii) ANQ-default: our default ANQ algorithm where $\mu_\omega$ is optimized.
The results show that using an optimized $\mu$ (i.e., default ANQ) significantly outperforms the other two variants, especially on low-quality datasets, clearly demonstrating the benefit of our design. Additionally, ANQ with Gaussian noise $\mu$ performs slightly worse than IQL, likely because the Q values of dataset actions do not require smoothing, and the randomness introduced by Gaussian noise may cause the Bellman target to select actions inferior to the original dataset actions.

\subsection{Stochastic Policies}
\label{app_sec:stochastic policy}
In practice, our algorithm is compatible with stochastic policies. Both policies introduced in the method (the auxiliary policy $\mu$ and the final policy $\pi$) can be replaced by stochastic counterparts without modification to the overall framework.

For optimizing a stochastic auxiliary policy $\mu_\omega$ (typically Gaussian), Eq.~\eqref{eq:mu} can be directly optimized using the reparameterization trick:
\begin{equation*}
\max_{\mu_\omega} \mathbb{E}_ {(s,a)\sim \mathcal{D},\epsilon\sim\mathcal{N}(0,1)} \left[ Q_\theta(s,a+\mu_\omega(\epsilon;s,a)) - \lambda \exp(\alpha (Q_{\theta'}(s,a) - V_\psi(s))) \|\mu_\omega(\epsilon;s,a)\| \right].
\end{equation*}

For extracting a stochastic final policy $\pi_\phi$ (typically Gaussian), the regression problem in Eq.~\eqref{eq:pi} can be reformulated as a maximum likelihood objective:
\begin{equation*}
\min_{\pi_\phi} \mathbb{E}_ {(s,a)\sim \mathcal{D},\epsilon\sim\mathcal{N}(0,1)} \exp (\beta(Q_{\theta'}(s,a+\mu_\omega(\epsilon;s,a)) - V_\psi(s))) \log \pi_\phi(a+\mu_\omega(\epsilon;s,a)|s)
\end{equation*}

Empirically, we tested a stochastic version of the algorithm, denoted as ANQ-stochastic $\mu$\&$\pi$, where the both policies are Gaussian. The results, reported in \cref{app_tab:policy}, show that it performs comparably to the default ANQ across most Gym locomotion tasks, though slightly worse on a few.

\begin{table}[t]
  \caption{Averaged normalized scores on Gym locomotion tasks over five random seeds.
  }
  \vspace{-1mm}
  \label{app_tab:policy}
  \small
    \footnotesize
  \centering
  \setlength{\tabcolsep}{3.7pt}
  \begin{adjustbox}{max width=390pt}
  \begin{tabular}{@{}l cccc@{}}
    \toprule
    Dataset-v2 & IQL & ANQ-Gaussian noise $\mu$ & ANQ-stochastic $\mu$\&$\pi$ & ANQ-default \\ \midrule
    halfcheetah-m & 47.4$\pm$0.2 & 46.9$\pm$0.2 & 59.8$\pm$3.6 & 61.8$\pm$1.4 \\ 
    hopper-m & 66.2$\pm$5.7 & 63.2$\pm$5.4 & 94.5$\pm$7.9 & 100.9$\pm$0.6 \\ 
    walker2d-m & 78.3$\pm$8.7 & 82.9$\pm$2.2 & 84.3$\pm$0.9 & 82.9$\pm$1.5 \\ 
    halfcheetah-m-r & 44.2$\pm$1.2 & 43.1$\pm$0.2 & 54.3$\pm$1.9 & 55.5$\pm$1.4 \\ 
    hopper-m-r & 94.7$\pm$8.6 & 66.1$\pm$9.8 & 103.3$\pm$0.4 & 101.5$\pm$2.7 \\ 
    walker2d-m-r & 73.8$\pm$7.1 & 81.2$\pm$4.5 & 93.6$\pm$2.9 & 92.7$\pm$3.8 \\ 
    halfcheetah-m-e & 86.7$\pm$5.3 & 85.1$\pm$6.2 & 94.6$\pm$0.5 & 94.2$\pm$0.8 \\ 
    hopper-m-e & 91.5$\pm$14.3 & 60.8$\pm$38.4 & 106.0$\pm$5.2 & 107.0$\pm$4.9 \\ 
    walker2d-m-e & 109.6$\pm$1.0 & 111.9$\pm$0.5 & 112.1$\pm$0.2 & 111.7$\pm$0.2 \\ 
    halfcheetah-r & 13.1$\pm$1.3 & 2.3$\pm$0.0 & 26.5$\pm$1.7 & 24.9$\pm$1.0 \\ 
    hopper-r & 7.9$\pm$0.2 & 7.2$\pm$0.2 & 31.0$\pm$0.2 & 31.1$\pm$0.2 \\ 
    walker2d-r & 5.4$\pm$1.2 & 6.0$\pm$0.2 & 8.3$\pm$7.6 & 11.2$\pm$9.5 \\ \midrule
    total & 718.8 & 656.7 & 868.2 & 875.4 \\ 
    \bottomrule
  \end{tabular}
  \end{adjustbox}
  \vspace{-1mm}
\end{table}

\subsection{Learning Curves}
\label{app_sec:offline curves}
Learning curves of \algo on Gym-MuJoCo locomotion tasks and Antmaze tasks are presented in \cref{app_fig:mujoco_appendix} and \cref{app_fig:antmaze_appendix}, respectively. The curves are averaged over 5 random seeds, with the shaded area representing the standard deviation across seeds.

\section{Broader Impact}
\label{app_sec:Broader Impact}

Offline reinforcement learning (RL) holds significant potential for expanding RL’s real-world applications in fields like robotics, recommendation systems, healthcare, and education, particularly in scenarios where data collection is expensive or risky. Nevertheless, it is essential to be aware of the potential negative societal impacts that could arise from the deployment of offline RL systems. One concern is that the data used for training may contain inherent biases, which could then be reflected in the resulting policies, potentially exacerbating existing inequalities or reinforcing harmful stereotypes. Another issue is the potential impact of offline RL on employment, as it may lead to the automation of jobs that are currently performed by humans, such as in manufacturing or autonomous driving. To ensure the responsible use of offline RL, it is crucial to address these challenges, striking a balance between fostering innovation and mitigating adverse societal consequences.

\begin{figure}[ht]
	\centering
        \vspace{1cm}
	\includegraphics[width=\linewidth]{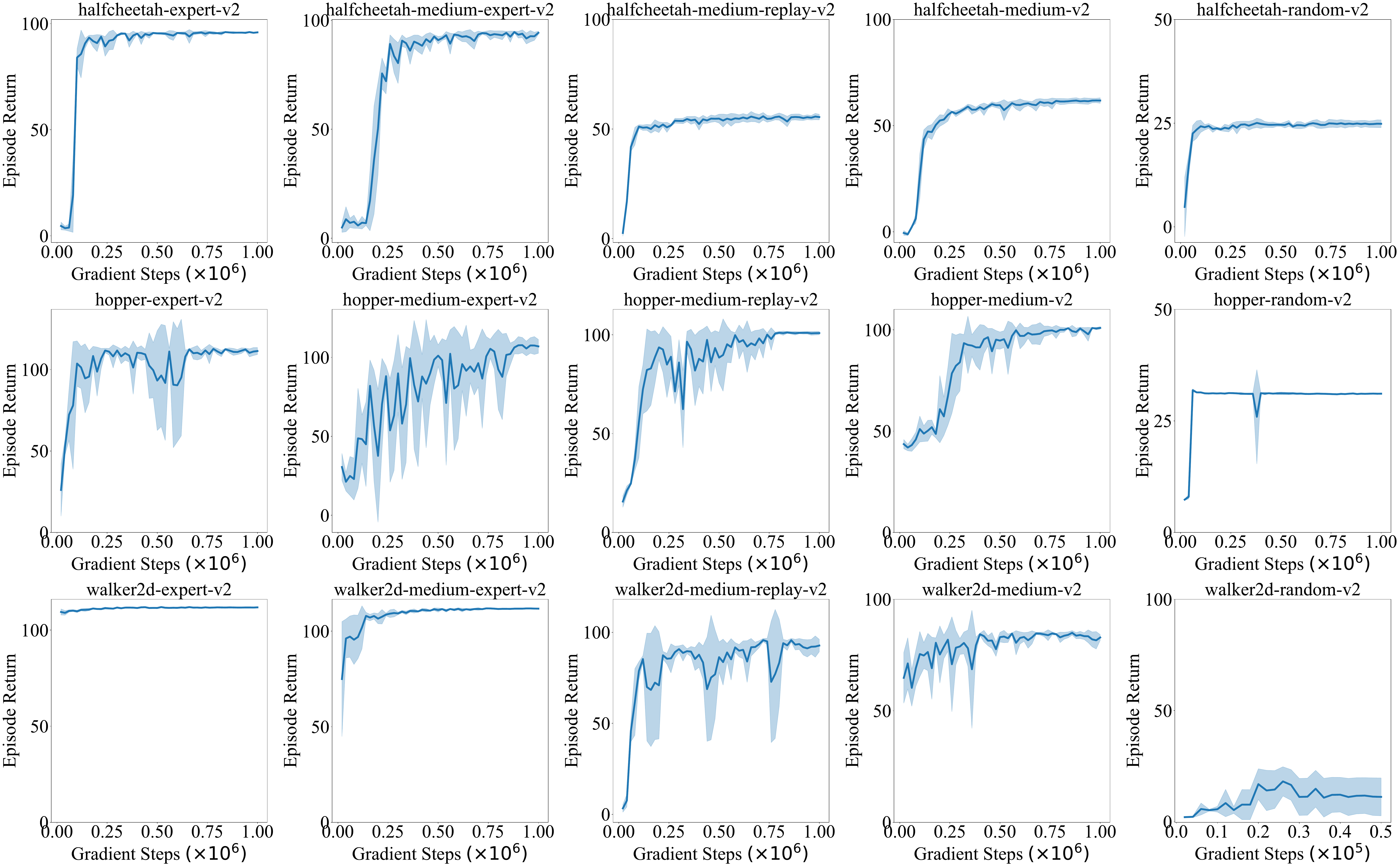}
	\vspace{-0.0cm}
	\caption{Learning curves of \algo on Gym locomotion tasks during offline training.
        The curves are averaged over 5 random seeds, with the shaded area representing the standard deviation across seeds.
	}
	\label{app_fig:mujoco_appendix}
\end{figure}
\begin{figure}
	\centering
	\includegraphics[width=0.92\linewidth]{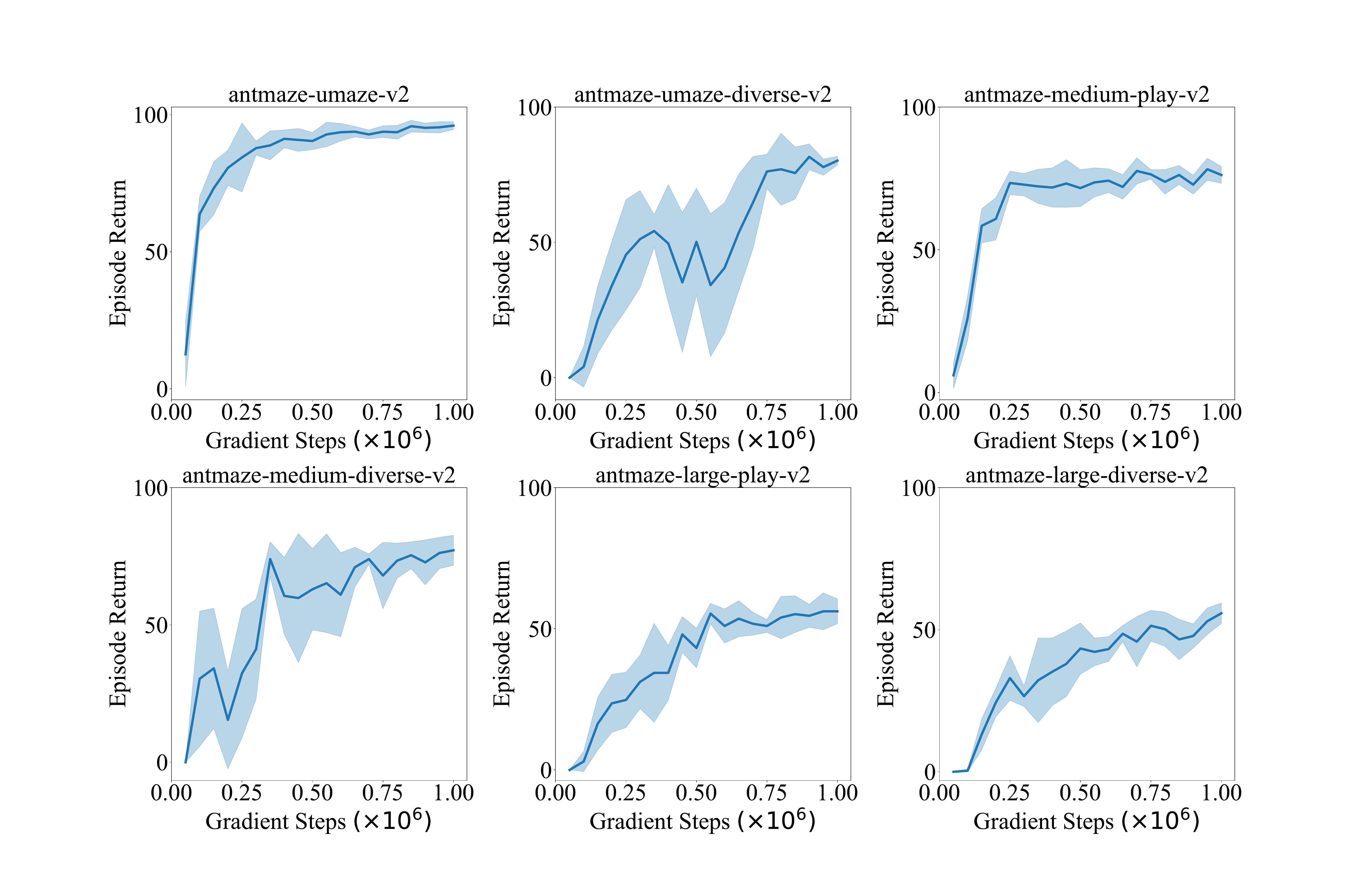}
	\vspace{-0.0cm}
	\caption{Learning curves of \algo on Antmaze tasks during offline training.
        The curves are averaged over 5 random seeds, with the shaded area representing the standard deviation across seeds.
	}
	\label{app_fig:antmaze_appendix}
\end{figure}

\end{document}